\pdfoutput=1
\documentclass[11pt]{article}
\usepackage{graphicx} 
\usepackage{diagbox}
\usepackage{authblk}
\makeatletter
\renewcommand\AB@authnote[1]{\textsuperscript{#1}\hspace{5pt}}

\makeatother
\usepackage{hyperref}
\usepackage{algorithm}
\usepackage{algpseudocode}
\usepackage{notation-2}
\usepackage{arxiv-2}
\usepackage{amssymb}
\usepackage{mathrsfs}
\usepackage{float}
\usepackage{setspace}
\usepackage{lmodern}        

\renewcommand{\tilde}[1]{\widetilde{#1}}

\renewcommand{\hat}[1]{\widehat{#1}}

\newcommand{\pll}{\kern 0.3em/\kern -0.9em /\kern 0.3em}

\title{\normalfont  Sharp Structure-Agnostic Minimax Risk for Partial Linear Models
}
\begin{centering}
     \author[1,2]{Haichen Hu\thanks{{Email: \texttt{huhc@mit.edu}}}}
      \author[2,3]{David Simchi-Levi\thanks{{Email: \texttt{dslevi@mit.edu}}}}
\end{centering}
\affil[1]{Center for Computational Science and Engineering, MIT}
\affil[2]{Department of Civil and Environmental Engineering, MIT}
\affil[3]{Institute for Data, Systems, and Society, MIT}
\begin{document}
\maketitle
\singlespacing
\begin{abstract}
We characterize the sharp
structure-agnostic minimax risk for coefficient estimation in the partial
linear model when the outcome and treatment nuisances are learned by two
distinct black-box learners, which resolves the open problem in double machine learning posed by \citet{gu2025open}. For each nuisance \(q\in\{\mu,\pi\}\), we
characterize the available learner by an approximation-error budget \(a_q\)
and a stochastic-error budget \(s_q\), with the latter controlled through
localized Rademacher complexity. Writing \(\mathcal E_n\) for the minimax
mean-squared error, we show that
\[
\mathcal E_n
\asymp
1\wedge
\left\{
\frac1n+
\left(
a_\mu a_\pi+
\min\left\{
a_\pi s_\mu+s_\pi^2,\,
a_\mu s_\pi+s_\mu^2
\right\}
\right)^2
\right\}.
\]
The main new ingredient is a novel lower bound for the general two-learner problem. Our proof constructs four
finite-mixture testing experiments using orthogonal code functions. Across
these experiments, the hidden perturbations are placed outside both learner
classes, outside only the treatment learner class, outside only the outcome
learner class, or inside both learner classes. These four configurations
capture, respectively, the interaction between the two approximation errors,
the two asymmetric interactions between one learner's approximation error and
the other learner's learning error, and the joint estimation 
difficulty of learning both nuisances. Combining the four resulting lower
bounds yields the displayed rate, which matches the latest upper
bound in \citet{gu2026optimal}. Our result shows that standard
double machine learning can overstate the intrinsic difficulty of target estimation and provides a target-specific principle for learner selection:
approximation error and stochastic complexity must be jointly balanced across 
the two nuisance learners rather than optimized separately.
\end{abstract}

\section{Introduction}
\label{sec:introduction}

Modern semiparametric procedures routinely use flexible machine-learning
methods to estimate complex nuisance functions. In high-dimensional settings,
these learners may predict accurately even when the statistician cannot give a
simple structural account of their performance. This motivates a
\emph{structure-agnostic} perspective: rather than assuming that the nuisance
functions are sparse, additive, or smooth, one asks what target-estimation
accuracy is possible using only quantitative guarantees for supplied black-box
learners. Structure agnosticism is therefore not an absence of assumptions; it
replaces a prescribed nuisance structure with explicit approximation and
learning-complexity guarantees \citep{balakrishnan2026fundamental}.

We study this question in the partial linear model
\begin{equation}
\label{eq:intro-plm}
\begin{aligned}
Y &= \beta_0 T+\mu_0(X)+\varepsilon_Y,
&\qquad \mathbb E[\varepsilon_Y\mid X,T]&=0,\\
T &= \pi_0(X)+\varepsilon_T,
&\qquad \mathbb E[\varepsilon_T\mid X]&=0.
\end{aligned}
\end{equation}
Here \(Y\) is the outcome, \(T\) is the treatment variable, and \(X\) denotes
the covariates. Given \(2n\) independent observations, the goal is to estimate
the scalar coefficient \(\beta_0\), while the outcome nuisance \(\mu_0\) and
the treatment regression \(\pi_0\) are unknown. Following the formulation of
\citet{gu2025open}, we suppose that two potentially distinct black-box learner
classes \(\mathcal G_\mu\) and \(\mathcal G_\pi\) are available for estimating
these nuisances, without imposing any additional structural relation between
the classes.

Double machine learning combines nuisance estimates with a Neyman-orthogonal
score and sample splitting or cross-fitting \citep{chernozhukov2018double}.
Under the usual regularity conditions, its partial-linear-model guarantee has
the form
\[
\left|\widehat\beta_{\mathrm{DML}}-\beta_0\right|
=O_{\mathbb P}\!\left(
n^{-1/2}
+\|\widehat\mu-\mu_0\|_2
 \|\widehat\pi-\pi_0\|_2
\right).
\]
For each \(q\in\{\mu,\pi\}\), let \(a_q\) bound the \(L_2\) approximation
error of \(\mathcal G_q\), and let \(s_q\) control its stochastic error through
the localized Rademacher complexity of \(\partial\mathcal G_q\). Thus,
\(a_q\) measures how closely the learner class can approximate the true
nuisance, whereas \(s_q\) measures the finite-sample difficulty of learning
within that class. Substituting the usual nuisance-error scales \(a_q+s_q\)
into the preceding display gives the standard benchmark $n^{-1/2}+(a_\mu+s_\mu)(a_\pi+s_\pi)$.

This product may be of order \(n^{-1/2}\) even when both nuisance estimates
converge more slowly than the parametric rate. It is, however, a guarantee for
a particular procedure; it does not determine which interactions among the
four budgets are unavoidable for every estimator in the distinct-class
problem considered here.

The structure-agnostic minimax framework was introduced by
\citet{balakrishnan2026fundamental} and subsequently developed for treatment
effects and more general functionals
\citep{jin2025structure,jin2026sharp}. These works formulate learning
information primarily through neighborhoods indexed by the total prediction
errors of supplied nuisance estimates. Such results establish fundamental
limits in that formulation, but do not separately identify the roles of
approximation error and stochastic complexity. A complementary, class-aware
line of work makes precisely this separation. \citet{gu2026taming} obtain a
sharp characterization when the treatment regression cannot be consistently
estimated and derive nonmatching bounds when two learner classes are
available. \citet{gu2026optimal} establishes matching bounds when both nuisances
use a common learner class and derives an improved upper bound for two distinct
learner classes. Nevertheless, when the two classes have unrelated
approximation and stochastic budgets, the available upper bound had no
matching lower bound. In our notation, the open problem posed by
\citet{gu2025open} asks:

\begin{center}
\emph{
Can we characterize, up to constant factors, the sharp non-asymptotic
structure-agnostic minimax risk for the partial linear model in terms
of the sample size and the approximation and stochastic errors
of the two black-box learners?
}
\end{center}

In this paper, we resolve this question with an affirmative answer. Let
\(\mathcal E(n,\{a_q,s_q\}_{q\in\{\mu,\pi\}})\) denote the
structure-agnostic minimax mean-squared error formally defined in
Section~\ref{sec:setup}.
Under the boundedness, uniform-design, and nondegeneracy conditions of
Section~\ref{sec:setup}, we prove that
\[
\mathcal E\left(n,\{a_q,s_q\}_{q\in\{\mu,\pi\}}\right)
\gtrsim
1\wedge\left[
\frac1n+
\left(
a_\mu a_\pi
+\min\left\{
a_\pi s_\mu+s_\pi^2,\,
a_\mu s_\pi+s_\mu^2
\right\}
\right)^2
\right].
\]
Combining this finite-sample lower bound with the upper bound of
\citet{gu2026optimal} further yields, along with models satisfying
\(a_\mu+a_\pi+s_\mu+s_\pi=o(1)\) and for all sufficiently large \(n\), we have
\[
\mathcal E\left(n,\{a_q,s_q\}_{q\in\{\mu,\pi\}}\right)
\asymp
1\wedge\left[
\frac1n+
\left(
a_\mu a_\pi
+\min\left\{
a_\pi s_\mu+s_\pi^2,\,
a_\mu s_\pi+s_\mu^2
\right\}
\right)^2
\right].
\]
This sharp characterization separates the limitations of particular
estimators from statistical barriers that no procedure can overcome and 
describes the best attainable accuracy for structure-agnostic black box estimation in partial linear models. By keeping approximation and stochastic errors
separate, the result also reveals the target-specific tradeoff involved in
choosing a richer learner class: reducing approximation error is beneficial
only when it compensates for the accompanying increase in stochastic
complexity and its interaction with the other nuisance learners. The result
therefore clarifies both the limits of black-box debiasing and the
model-selection tradeoff relevant to downstream target estimation rather than
nuisance prediction alone.

\paragraph{Notation:}
We use \([n]\) to denote the set \(\cbr{1,2,\ldots,n}\). For any random variable \(X\) and probability measure \(\PP\), we use \(\EE_X\) to denote the expectation taken over the randomness of \(X\), while \(\EE_{\PP}\) denotes the expectation taken under \(\PP\). For any positive integer \(N\), we use \(\PP^{\otimes N}\) to denote the
\(N\)-fold product measure of \(\PP\), namely, the joint distribution of
\(N\) i.i.d.\ copies of \((X,T,Y)\) drawn from \(\PP\). Given two probability measures \(P\) and \(Q\), we use \(\chi^2(P\|Q)\) to denote the \(\chi^2\) distance between them and \(\texttt{D}_{\text{TV}}(P\|Q)\) to denote their total variation distance. For any \(p\in(0,1)\), \(\text{Bern}(p)\) denotes the Bernoulli distribution with parameter \(p\). Finally, we use ``PLM'' as an abbreviation for the partial linear model.

\section{Literature Review}
\label{sec:related-work}

\paragraph{Partial linear models and semiparametric estimation.}
The partial linear model is a canonical setting for estimating a
low-dimensional coefficient in the presence of an unknown regression
function. Classical procedures attain root-$n$ inference by exploiting a
specified form of nuisance structure. Early spline-based formulations coupled
parametric covariates with a flexibly estimated nonparametric component
\citep{engle1986semiparametric,heckman1986spline}. In particular,
\citet{robinson1988root} introduced the residualization argument underlying
many modern estimators, while \citet{speckman1988kernel},
\citet{chen1988convergence}, and \citet{donald1994series} developed kernel,
piecewise-polynomial, and series procedures under smoothness and approximation
conditions. A complementary approach removes the nonparametric component by
differencing observations with nearby covariates \citep{yatchew1997elementary}.
In high-dimensional approximately sparse models, \citet{belloni2014inference}
instead use variable selection on both the outcome and treatment equations to
obtain uniformly valid inference. Related projection and debiasing methods
provide nonasymptotic guarantees and simultaneous inference when the
parametric component is itself high dimensional
\citep{zhu2017nonasymptotic,zhu2019highdimensional}. Although these methods
differ substantially, each builds knowledge of a particular nuisance
architecture--such as smoothness, a chosen basis, or sparsity--into the
estimation procedure. The growing use of deep learning methods
, therefore, led to a different question: how can valid target estimation be
retained when the nuisance estimators are treated as black boxes rather than
as known structural models?

\paragraph{Double machine learning.}
Double machine learning estimates low-dimensional causal or structural
parameters while using flexible machine-learning methods for high-dimensional
nuisance functions \citep{chernozhukov2018double}. Its orthogonality principle
builds on Neyman's $C(\alpha)$ construction and classical semiparametric
influence-function theory \citep{neyman1959optimal,newey1994asymptotic}. It
combines Neyman-orthogonal scores, which eliminate first-order sensitivity to
nuisance errors, with sample splitting or cross-fitting, which limits bias
from overfitting. Under regularity conditions, nuisance errors therefore
affect the target only through second-order remainders; in the partial linear
model, the leading remainder is
$\|\widehat\mu-\mu_0\|_2\|\widehat\pi-\pi_0\|_2$. The role of cross-fitting in
obtaining fast remainders for doubly robust semiparametric estimators is
analyzed by \citet{newey2018crossfitting}. Related work constructs locally
robust moments \citep{chernozhukov2022locally}, automates debiasing by learning
Riesz representers
\citep{chernozhukov2022automatic,chernozhukov2022riesz}, gives general
finite-sample guarantees for debiased machine learning
\citep{chernozhukov2023simple}, and establishes nonasymptotic excess-risk
bounds for orthogonal two-stage learning
\citep{foster2023orthogonalstatisticallearning}. Higher-order orthogonality can
relax first-stage rate requirements, although its availability in partial
linear regression depends on the treatment-noise distribution
\citep{mackey2018orthogonal}. These results provide achievable guarantees
under structural or first-stage assumptions, but do not determine minimax
optimality when only black-box performance is known, motivating the
structure-agnostic analysis.

\paragraph{Structure-agnostic functional estimation.}
Structure-agnostic functional estimation, formalized by
\citet{balakrishnan2026fundamental}, asks how accurately a target functional
can be estimated when the statistician is given black-box nuisance estimates
and knows only bounds on their prediction errors, without knowing whether the
underlying nuisances are smooth, sparse, or otherwise structured. Its defining
feature is an information restriction: the estimator may use the black-box
fits and their guarantees, but not the geometry of the model classes. This
contrasts with structure-aware estimation, in which known smoothness or
approximation structure can be exploited
\citep{robins2008higher,robins2017minimax,liu2021adaptive}. Bridging these two
information regimes, \citet{bonvini2024doubly} augment structure-agnostic
pilot-error neighborhoods with smoothness restrictions and obtain improved,
minimax-optimal rates for average treatment effect estimation. In the purely
structure-agnostic framework, \citet{balakrishnan2026fundamental} derive
matching minimax bounds for several canonical functionals using total $L_2$
error neighborhoods around supplied nuisance fits. \citet{jin2025structure}
extend this approach to average, treated, and weighted treatment effects,
while \citet{jin2026sharp} develop sharp lower bounds for a broad family of
linear functionals involving regression-defined nuisances. For the partial
linear coefficient, \citet{jin2025hard} further show that the treatment noise
distribution can change the structure-agnostic difficulty: first-order
debiasing is rate-optimal for binary treatment and Gaussian treatment noise,
whereas independent non-Gaussian noise may permit higher-order improvements.
At the finer level of estimator comparison, \citet{liu2026inadmissibility}
shows that minimax rate-optimality need not imply asymptotic admissibility. These results expand
structure-agnostic theory, but they encode nuisance quality through total
pilot errors rather than separately tracking approximation error and the
stochastic cost of learning a class, which is the distinction required by the
four-budget problem considered here.

\paragraph{Black-box learning in partial linear models.}
\citet{gu2025open} posed the structure-agnostic minimax problem for the partial
linear model with separate approximation and stochastic error budgets for two
potentially distinct black-box learner classes. Complementary work on
semiparametric regression studies whether modified double-machine-learning
procedures retain valid inference when only one nuisance estimator is
consistent, while also identifying the resulting nonregularity
\citep{dukes2024doublyrobust}; this addresses robustness to misspecification
rather than the four-budget minimax problem. \citet{gu2026taming} subsequently
obtained general two-learner upper and lower bounds, but the bounds did not match.
Then, \citet{gu2026optimal} established the sharp rate when both nuisances use a
common learner class and derived a stronger upper bound for two distinct learner classes, without a corresponding matching lower bound. Our
lower-bound construction supplies the missing asymmetric
approximation-stochastic experiments. Combined with that two-learner upper
bound, it closes the general four-budget gap and yields the minimax
characterization stated in Corollary~\ref{cor:final_characterization}.

\section{Model Setup and Structure-AGnostic Minimax Risk}
\label{sec:setup}

In this section, we formalize the statistical model and the minimax criterion
used throughout the paper.  We first specify the partial linear model and the
corresponding class of data-generating distributions.  We then describe the
information available from the two black-box learners through their
approximation and stochastic error budgets, and finally use these four
budgets to define the structure-agnostic minimax risk.

\paragraph{The partial linear model.}
Let \(\nu_d\) denote the uniform distribution on \([0,1]^d\).  We observe
\(2n\) independent copies
\[
\mathcal D=\{(X_i,T_i,Y_i)\}_{i=1}^{2n}
\]
of a random vector \((X,T,Y)\in[0,1]^d\times\mathbb R\times\mathbb R\)
satisfying
\begin{align}
T
&=\pi_0(X)+\varepsilon_T,\ 
\mathbb E[\varepsilon_T\mid X]=0,
\label{eq:plm-treatment}
\\
Y
&=\beta_0T+\mu_0(X)+\varepsilon_Y,\ 
\mathbb E[\varepsilon_Y\mid X,T]=0.
\label{eq:plm-outcome}
\end{align}
Consequently, we have $\pi_0(x)=\mathbb E[T| X=x],
\ 
\mu_0(x)=\mathbb E[Y-\beta_0T| X=x]$.
We call \(\pi_0\) the \emph{treatment regression} and \(\mu_0\) the
\emph{outcome nuisance component}. Our target is to estimate the scalar
partial-linear coefficient \(\beta_0\).  Under the usual additional causal
identification conditions, \(\beta_0\) can also be interpreted as a
homogeneous treatment effect; the minimax analysis below requires only the
statistical model in \eqref{eq:plm-treatment}--\eqref{eq:plm-outcome}.

Fix constants \(c_0\geq1\) and \(\sigma>0\), and define the uniformly
bounded function space
\[
\mathcal T_{c_0}
:=
\left\{
f:[0,1]^d\to\mathbb R:
f\text{ is measurable and }\|f\|_\infty\leq c_0
\right\}.
\]
Uniformly over the model, we assume that \(X\sim\nu_d\),
\(\mu,\pi\in\mathcal T_{c_0}\), and \(\beta\in[-c_0,c_0]\).  The zero-mean noises
\(\varepsilon_T:=T-\pi(X)\) and
\(\varepsilon_Y:=Y-\beta T-\mu(X)\) are required to satisfy
\[
\mathbb E_{\mathbb P}[\varepsilon_T\mid X]=0,
\ 
\mathbb E_{\mathbb P}[\varepsilon_T^2\mid X]\geq\sigma,
\ 
\mathbb E_{\mathbb P}[\varepsilon_Y\mid X,T]=0,
\ 
|\varepsilon_T|\vee|\varepsilon_Y|\leq c_0.
\]
Thus, \(c_0\) is a fixed envelope for the nuisance functions, the target
coefficient, and the two noise variables, while \(\sigma\) prevents the
residual treatment variation from degenerating.  Taking \(c_0=3\) and
\(\sigma=1/3\) gives the normalization used in our lower-bound theorem.

Write \(\mathbb P_{\mu,\beta,\pi}\) for any law satisfying the preceding
model and regularity conditions.  For target classes
\(\mathcal F_\mu,\mathcal F_\pi\subseteq\mathcal T_{c_0}\), define simply
\begin{equation}
\mathcal P(\mathcal F_\mu,\mathcal F_\pi)
:=
\left\{\mathbb P_{\mu,\beta,\pi}:
\mu\in\mathcal F_\mu,\ 
\pi\in\mathcal F_\pi,\ 
\beta\in[-c_0,c_0]\right\}.
\label{eq:distribution-family}
\end{equation}

The variance lower bound is what identifies the target coefficient.  Indeed,
for any \(\mathbb P\in\mathcal P(\mathcal F_\mu,\mathcal F_\pi)\),
\begin{align*}
\mathbb E_{\mathbb P}\!\left[
\{T-\pi(X)\}Y
\right]
=
\mathbb E_{\mathbb P}\!\left[
\{T-\pi(X)\}
\{\beta T+\mu(X)+\varepsilon_Y\}
\right]
=
\beta\,
\mathbb E_{\mathbb P}\!\left[
\{T-\pi(X)\}T
\right]
=
\beta\,
\mathbb E_{\mathbb P}\![
\{T-\pi(X)\}^2].
\end{align*}
The second equality uses the zero conditional mean conditions, while the last equality is true because \(\mathbb E[T-\pi(X)\mid X]=0\).  Since the final expectation is at
least \(\sigma\), the coefficient is uniquely identified as
\begin{equation}
\beta(\mathbb P)
:=
\frac{
\mathbb E_{\mathbb P}[\{T-\mathbb E_{\mathbb P}[T\mid X]\}Y]
}{
\mathbb E_{\mathbb P}[\{T-\mathbb E_{\mathbb P}[T\mid X]\}T]
}.
\label{eq:beta-functional}
\end{equation}

\paragraph{Black-box learner budgets.}
We now formalize what is known about the two nuisance learners.  For
\(q\in\{\mu,\pi\}\), the target class \(\mathcal F_q\) contains the
possible true nuisance functions, whereas \(\mathcal G_q\) is the hypothesis
class searched by the corresponding black-box learner.  We impose no
smoothness, sparsity, linearity, or relationship between these classes.

For a measurable function \(f:[0,1]^d\to\mathbb R\), write
\[
\|f\|_2
:=
\left(
\int_{[0,1]^d}f(x)^2\,d\nu_d(x)
\right)^{1/2}.
\]
For a class \(\mathcal G\subseteq\mathcal T_{c_0}\), define its
difference class by
\[
\partial\mathcal G
:=
\mathcal G-\mathcal G
=
\{g-\widetilde g:g,\widetilde g\in\mathcal G\}.
\]
For any class \(\mathcal W\) containing the zero function and any
\(\delta\geq0\), its localized Rademacher complexity is defined as
\begin{equation}
\mathcal R_n(\delta;\mathcal W)
:=
\mathbb E_{X_{1:n},\xi_{1:n}}
\left[
\sup_{\substack{w\in\mathcal W\\ \|w\|_2\leq\delta}}
\frac1n\sum_{i=1}^n\xi_iw(X_i)
\right],
\label{eq:localized-rademacher}
\end{equation}
where \(X_1,\ldots,X_n\) are independent draws from \(\nu_d\), and
\(\xi_1,\ldots,\xi_n\) are independent Rademacher signs, independent of the
\(X_i\)'s.  The difference class appears because excess-risk comparisons
involve differences between two candidate predictions.

For approximation and stochastic budgets \(a,s\geq0\), define the collection
of admissible target classes
\begin{align}
\mathcal H_n(a,s)
:=
\Bigl\{\mathcal F\subseteq\mathcal T_{c_0}:\;&
\exists \mathcal G\subseteq\mathcal T_{c_0}\ 
\text{s.t.}\ 
\sup_{f\in\mathcal F}\inf_{g\in\mathcal G}
\|f-g\|_2\leq a,\ 
\mathcal R_n(\delta;\partial\mathcal G)
\leq\delta s,
\ \forall\delta\geq s
\Bigr\}.
\label{eq:admissible-target-classes}
\end{align}
Any \(\mathcal G\) satisfying the two inequalities in
\eqref{eq:admissible-target-classes} is called a \emph{witnessing learner
class} for \(\mathcal F\).  Thus, exactly as in \citet{gu2025open}, an element of
\(\mathcal H_n(a,s)\) is the target class \(\mathcal F\); the learner class
\(\mathcal G\) appears only through the existential requirement in its
definition.  The first inequality is a uniform approximation requirement:
even with unlimited data, fitting within \(\mathcal G\) may leave
misspecification of size \(a\).  The second inequality controls the stochastic
difficulty of searching within \(\mathcal G\).  At \(\delta=s\), it reads
\(\mathcal R_n(s;\partial\mathcal G)\leq s^2\), which is the usual
critical-radius balance and explains why \(s\) is a prediction-error scale.
Both \(a\) and \(s\) are upper budgets; a particular target class and its
witness may satisfy strictly smaller bounds.

Two simple cases illustrate the distinction.  If
\(\mathcal F\subseteq\mathcal G\), then the approximation budget can be
zero.  At the other extreme, a singleton learner class has zero stochastic
complexity but may approximate \(\mathcal F\) poorly.  Enlarging a learner
class therefore tends to reduce \(a\) while increasing \(s\).  The pair \((a,s)\) records this
approximation-estimation tradeoff without revealing the geometry of the
learners.

\paragraph{Structure-agnostic minimax risk.}
We define the structure-agnostic minimax mean-squared error by
\begin{align}
&\mathcal E\!\left(
n,\{a_q,s_q\}_{q\in\{\mu,\pi\}}
\right):=
\sup_{\substack{
\mathcal F_\mu\in
\mathcal H_n(a_\mu,s_\mu)
\mathcal F_\pi\in
\mathcal H_n(a_\pi,s_\pi)
}}
\inf_{\widehat\beta}
\sup_{\mathbb P\in
\mathcal P(\mathcal F_\mu,\mathcal F_\pi)}
\mathbb E_{\mathbb P^{\otimes 2n}}
\left[
\{\widehat\beta-\beta(\mathbb P)\}^2
\right].
\label{eq:structure-agnostic-risk}
\end{align}
For each fixed pair of target classes, the inner \(\inf\)-\(\sup\) is the
usual minimax risk: the estimator may be tailored to the supplied target
classes and may use witnessing black-box learner classes
\(\mathcal G_\mu,\mathcal G_\pi\), but it cannot depend on the unknown law
\(\mathbb P\).  The outer supremum then selects the most difficult pair of
target classes admitting learner witnesses with the same four numerical
budgets.  Thus,
``structure-agnostic'' does not mean that the learner classes are hidden from
the estimator.  It means that the guarantee must hold uniformly over their
otherwise unrestricted geometries and over any relationship between the two
learners.  This outer worst case is exactly what allows the minimax risk to be
described only through \((a_\mu,s_\mu,a_\pi,s_\pi)\).

\section{The Sharp Structure-Agnostic Minimax Risk}
\label{sec:thms}

In this section, we provide our theoretical guarantees on the sharp structure-agnostic minimax risk in partial linear models. We first establish a non-asymptotic lower bound over the bounded budget
range.  We then combine this result with the two-learner upper bound of
\citet{gu2026optimal} to obtain a sharp minimax risk characterization.

\begin{theorem}[Finite-sample minimax lower bound]
\label{thm:lower_bound}
Under the model of Section~\ref{sec:setup}, take \(c_0=3\) and
\(\sigma=1/3\).  For \(n\geq1\) and four error budgets $(a_\mu,s_\mu,a_\pi,s_\pi)\in[0,3]^4$, define
\[
B
:=
a_\mu a_\pi
+
\min\left\{
a_\pi s_\mu+s_\pi^2,\,
a_\mu s_\pi+s_\mu^2
\right\}.
\]
There exists a universal constant \(\underline c>0\) such that
\[
\mathcal E\!\left(
n,\{a_q,s_q\}_{q\in\{\mu,\pi\}}
\right)
\geq
\underline c
\left[
1\wedge
\left\{
\frac1n+B^2
\right\}
\right].
\]
Consequently,
\[
\sqrt{
\mathcal E\!\left(
n,\{a_q,s_q\}_{q\in\{\mu,\pi\}}
\right)
}
\geq
\sqrt{\frac{\underline c}{2}}\,
\left[
1\wedge
\left\{
\frac1{\sqrt n}
+a_\mu a_\pi
+\min\left\{
a_\pi s_\mu+s_\pi^2,\,
a_\mu s_\pi+s_\mu^2
\right\}
\right\}
\right].
\]
\end{theorem}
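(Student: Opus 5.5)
The proof will reduce the lower bound to five separate constructions. Each gives one term of $1\wedge\{1/n+B^2\}$. Since $B^2\le 3\max\{(a_\mu a_\pi)^2,\ \min\{a_\pi s_\mu,a_\mu s_\pi\}^2\vee\ldots\}$, it suffices to show $\mathcal E\gtrsim 1\wedge 1/n$ and $\mathcal E\gtrsim 1\wedge T^2$ for each $T$ in a short list. Concretely, the list will be $a_\mu a_\pi$, then $s_\mu^2\wedge s_\pi^2$, then $a_\pi s_\mu\wedge(a_\mu s_\pi+s_\mu^2)$ and its mirror, and finally the joint term $\min\{s_\pi^2, s_\mu^2\}$. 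These four non-parametric terms are chosen so that their maximum dominates $B/C$, which I will verify by a short case analysis on which branch of the $\min$ is active. Because $\mathcal E$ is a sup over target classes, for each term I exhibit one pair $(\mathcal F_\mu,\mathcal F_\pi)\in\mathcal H_n(a_\mu,s_\mu)\times\mathcal H_n(a_\pi,s_\pi)$ and two (mixtures of) laws with separated $\beta$ but small $\chi^2$ divergence. Le Cam's two-point method then gives the bound. For the $1/n$ term I take $\mathcal F_\mu=\{0\},\mathcal F_\pi=\{0\}$ (witnessed by singleton learners, so $a=s=0$ works). I use binary $T$ and vary $\beta$ by $n^{-1/2}$ with fixed noise. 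The resulting $\chi^2$ is $O(1)$ over $2n$ samples.

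For the nonparametric terms I partition $[0,1]^d$ into $m$ cells and use orthogonal code functions $h_v=\sum_{j} v_j\psi_j$ with $v\in\{\pm1\}^m$ and $\psi_j$ bump/Haar functions with $\|\psi_j\|_2^2=1/m$. The key structural device is the standard one from structure-agnostic lower bounds. The null is a product-form law with $\beta=\beta_0$. The alternative is a uniform mixture over $v$ of laws with nuisances $\pi_0+\epsilon_\pi h_v$ and $\mu_0+\epsilon_\mu h_v$, with $\beta$ shifted by $\Delta\asymp\epsilon_\mu\epsilon_\pi$. The shift is chosen so that the conditional law of $(T,Y)$ given $X$ has matching first moments under both hypotheses. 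I take $T$ Bernoulli-type with values in a bounded range and $Y$ binary given $(X,T)$. The per-cell likelihood ratios are then linear in $v_j$, and the mixture $\chi^2$ is controlled by second-order terms. This yields $\chi^2\lesssim \exp(C n^2\epsilon^4/m)-1$, as in the Ingster-type computation. Membership in $\mathcal H_n$ is handled separately for the four configurations.
\begin{itemize}
\item \emph{Outside both learner classes.} Take $\mathcal G_q=\{q_0\}$ singletons with $s=0$, so $\epsilon_q\le a_q$ suffices. Then $m\to\infty$ drives $\chi^2\to0$, giving $\Delta\asymp a_\mu a_\pi$.
\item \emph{Outside only the treatment class.} Take $\epsilon_\pi\le a_\pi$ with a singleton $\mathcal G_\pi$. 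The outcome perturbations lie inside $\mathcal G_\mu=\{\mu_0+\epsilon_\mu h_v\}$. I bound $\mathcal R_n(\delta;\partial\mathcal G_\mu)$ by a finite-class/linear-span argument, $\lesssim \delta\sqrt{m/n}$ for the $m$-dimensional span. This forces $\sqrt{m/n}\lesssim s_\mu$, i.e.\ $m\asymp ns_\mu^2$, while keeping $\epsilon_\mu\lesssim 1$. The $\chi^2$ constraint $n^2\epsilon_\pi^2\epsilon_\mu^2\lesssim m$ gives $\epsilon_\mu\asymp s_\mu/( \sqrt n\,\epsilon_\pi)\wedge1$ up to the available rate, producing $\Delta\asymp a_\pi s_\mu$ in the relevant regime.
\item \emph{Outside only the outcome class.} This is the mirror of the previous case.
\item \emph{Inside both classes.} Here $m\asymp n(s_\mu^2\wedge s_\pi^2)$ and $\epsilon_\mu=\epsilon_\pi\asymp$ the smaller critical radius, giving $\Delta\asymp s_\mu s_\pi\wedge\ldots\gtrsim \min\{s_\mu,s_\pi\}^2$.
\end{itemize}
The $1\wedge$ truncation comes from capping $\epsilon$'s by constants so that all functions stay in $\mathcal T_3$ and $\mathrm{Var}(\varepsilon_T\mid X)\ge1/3$.

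Several pieces are routine: the $\chi^2$-to-risk conversion via Le Cam, the localized Rademacher bound for a finite-dimensional span of orthonormal-in-cells functions, and the final case analysis combining the four rates into $B$. Together these give $\mathcal E\ge\underline c[1\wedge(1/n+B^2)]$. The second display then follows from $\sqrt{x+y}\ge(\sqrt x+\sqrt y)/\sqrt2$.

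The main obstacle is the two asymmetric experiments. There the $\chi^2$ between mixtures must be computed while one nuisance is a fixed out-of-class shift and the other is a random in-class code. The pairing of $\epsilon_\pi h_v$ with $\epsilon_\mu h_v$, sharing the same sign vector, must make the first-order likelihood terms cancel after compensating the $\beta$ shift, so that only a term of size $n^2\epsilon_\mu^2\epsilon_\pi^2/m$ survives. This cancellation is also what prevents the rate from collapsing to a trivial $\epsilon_\mu$ bound.
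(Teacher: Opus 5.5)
\paragraph{Verdict.} There is a genuine gap. Your parametric two-point experiment and your outside-both experiment (singleton learners, $m\to\infty$) are fine. The $m$-cell hypercube mixture, however, cannot deliver any of the stochastic terms.

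\paragraph{The claimed cancellation does not occur.} The shift in $\beta$ only absorbs the second-order cross term. In the paper's notation, $(1+\alpha T\phi_j)(1+rY\phi_j)=1+\alpha r\,TY+\phi_j(X)(\alpha T+rY)$, and setting the null coefficient to $\alpha r$ removes only $\alpha r\,TY$. The per-observation signal therefore stays first order in $(\epsilon_\mu,\epsilon_\pi)$, and the hypercube $\chi^2$ is governed by $n^2(\epsilon_\mu^2+\epsilon_\pi^2)^2/m$, not $n^2\epsilon_\mu^2\epsilon_\pi^2/m$. This is not a loose bound. The $(X,T)$-marginal alone carries the mean shift $\epsilon_\pi h_v(X)$, and a cell-wise degree-two U-statistic in $T_i-\pi_0(X_i)$ detects it once $m\ll n^2\epsilon_\pi^4$.

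\paragraph{The complexity budget conflicts with the $\chi^2$ requirement.} As you note, an in-class hypercube code of amplitude $\epsilon\gtrsim s$ meets the Rademacher budget only if $m\lesssim ns^2$. This is also necessary: at $\delta\asymp\epsilon$ the hypercube differences have complexity of order $\epsilon\sqrt{m/n}$ for $m\le n$, and of order $\epsilon$ beyond. The two requirements $n^2\epsilon^4\lesssim m\lesssim ns^2$ with $\epsilon\asymp s$ are compatible only when $s\lesssim n^{-1/2}$. That is exactly the regime where the stochastic terms are dominated by $n^{-1/2}$. Concretely:
\begin{itemize}
\item Your inside-both choice $m\asymp ns^2$, $\epsilon\asymp s$ gives exponent $n^2s^4/m\asymp ns^2\to\infty$.
\item Even your own (too optimistic) constraint $n^2\epsilon_\pi^2\epsilon_\mu^2\lesssim m\asymp ns_\mu^2$ yields only $\Delta=\epsilon_\mu\epsilon_\pi\lesssim s_\mu/\sqrt n\lesssim n^{-1/2}$, not $a_\pi s_\mu$.
\end{itemize}

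\paragraph{The missing idea.} Replace the $2^m$-point hypercube by a huge set of mutually orthonormal single-direction codes, whose Rademacher complexity grows only logarithmically in their number.
\begin{itemize}
\item The paper takes $\phi_j(x)=(-1)^{B_j(x_1)}$, alternatives $P_j$ with $(\beta,\pi,\mu)=(0,\alpha\phi_j,r\phi_j)$, and null $P_*$ with $(\beta,\pi,\mu)=(\alpha r,0,0)$.
\item Since $dP_j/dQ=dP_*/dQ+\phi_j(X)(\alpha T+rY)$ and the $\phi_j$ are orthonormal, every off-diagonal cross moment equals $1$ exactly. Hence $\chi^2(\overline{P}_N\|P_*^{\otimes N})=\{(1+D)^N-1\}/M$ with $D\asymp\alpha^2+r^2$, which is at most $1/4$ for $M=\lceil 4e^{2ND}\rceil$.
\item The finite class $\{0,t\phi_1,\dots,t\phi_M\}$ has localized Rademacher complexity at most $4t\sqrt{\log(M+1)/n}$. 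This is $\lesssim t\sqrt{\alpha^2+r^2}$ once $n(\alpha^2+r^2)\gtrsim 1$.
\item With $t=r=\kappa s_\mu$ and $\alpha=\kappa(a_\pi\wedge s_\mu)\le r$, the complexity is $\lesssim\kappa^2 s_\mu^2\le s_\mu^2$.
\end{itemize}
The requirement $\alpha\lesssim r$, which keeps $\log M\asymp nr^2$, is what truncates the out-of-class amplitude. It is why the asymmetric experiment yields $s_\mu(a_\pi\wedge s_\mu)$ rather than $a_\pi s_\mu$.

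\paragraph{The combination step also needs revision.} Your target list, e.g.\ $a_\pi s_\mu\wedge(a_\mu s_\pi+s_\mu^2)$, should be replaced by the four terms $a_\mu a_\pi$, $s_\mu(a_\pi\wedge s_\mu)$, $s_\pi(a_\mu\wedge s_\pi)$ and $(s_\mu\wedge s_\pi)^2$. Showing that their maximum is $\gtrsim B$ is then a genuine step. The paper handles it with its two-sided truncation lemma, using the identity $(a_\pi s_\mu)(a_\mu s_\pi)=a_\mu a_\pi\,s_\mu s_\pi$.
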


The theorem separates the parametric uncertainty \(n^{-1/2}\) from the
nuisance-induced difficulty \(B\).  The product \(a_\mu a_\pi\) is a pure
approximation term: simultaneous misspecification of the two nuisance
functions cannot be removed using the supplied learner classes.  Each
expression inside the minimum combines a mixed approximation
–stochastic interaction with a quadratic stochastic cost.  The two expressions
correspond to the two possible orientations of an asymmetric correction,
depending on which learner supplies the reference class.  An estimator may
use the more favorable orientation, which explains the minimum.  The lower
bound shows that even this more favorable remainder is unavoidable in the
worst case.

\paragraph{Interpretation and distinction from prior lower bounds.}
Theorem~\ref{thm:lower_bound} identifies four distinct sources of
nonparametric difficulty.  Besides the usual parametric contribution
$n^{-1/2}$, we have four terms:
$a_\mu a_\pi,\ (a_\pi s_\mu)\wedge s_\mu^2,\ (a_\mu s_\pi)\wedge s_\pi^2,\ s_\mu^2\wedge s_\pi^2$. Their interpretation is as follows.

The term
\(a_\mu a_\pi\) arises when neither nuisance function is well represented by
its learner class: the treatment regression may contain an unmodeled component
of size \(a_\pi\), while the outcome regression contains an aligned unmodeled
component of size \(a_\mu\).  Their product creates an indistinguishable change
in the target coefficient of order \(a_\mu a_\pi\). 

The term $s_\mu(a_\pi\wedge s_\mu)$
arises when the difficult component of the treatment regression lies outside
the treatment learner class and therefore uses its approximation budget
\(a_\pi\), whereas the corresponding component of the outcome regression lies
inside the outcome learner class but cannot be learned more accurately than
the stochastic scale \(s_\mu\).  The treatment perturbation must also be no
larger than \(s_\mu\) in this testing experiment, which explains the truncation
\(a_\pi\wedge s_\mu\).  Thus, \(x\) measures the interaction between treatment
approximation error and finite-sample uncertainty in learning the outcome
regression. 

Symmetrically, $s_\pi(a_\mu\wedge s_\pi)$
arises when the difficult outcome component lies outside the outcome learner
class, using approximation budget \(a_\mu\), while the corresponding treatment
component lies inside the treatment learner class and is learned only at scale \(s_\pi\). 

Finally,
$(s_\mu\wedge s_\pi)^2$ arises when both difficult components lie inside their respective learner
classes.  In that case, both nuisances are correctly represented, but the
shared hidden direction must still be learned from finite samples.  Its
amplitude cannot exceed either stochastic budget, so it is limited by
\(s_\mu\wedge s_\pi\), and the resulting uncertainty in the target coefficient
is their squared common scale.  Accordingly, \(p,x,y,z\) describe four
different ways in which nuisance uncertainty can conceal a change in
\(\beta_0\), rather than merely four algebraic terms.

Our proof constructs a separate finite-mixture testing experiment
for each of these four components and then show that their maximum is equivalent to \(B\) up to universal
constants. The two mixed components \((a_\pi s_\mu)\wedge s_\mu^2\) and \((a_\mu s_\pi)\wedge s_\pi^2\) are the essential
new ingredients. 

The earlier general two-learner lower bound of
\citet{gu2026taming} does not isolate these components at their sharp scales,
while the matching lower bound construction of \citet{gu2026optimal} concerns
a common learner class with a shared stochastic radius.  That common-class
experiment is necessarily symmetric: a hard direction is charged in the same
way on the two nuisance sides and can therefore capture
approximation–approximation and stochastic–stochastic difficulty, but it
cannot place the direction outside the treatment learner class while keeping
it inside the outcome learner class, or vise versa.  Moreover, if a common
class is required to satisfy two different stochastic budgets, its effective
radius is limited by \(s_\mu\wedge s_\pi\), so the resulting construction
cannot recover the asymmetric terms \(a_\pi s_\mu\) and \(a_\mu s_\pi\).

Our construction removes this restriction by choosing different target and
witnessing learner classes in four separate experiments.  To obtain the
approximation--approximation term \(a_\mu a_\pi\), we take both witnessing
learner classes to be trivial, $\mathcal G_\mu=\mathcal G_\pi=\{0\}$, while allowing the two target classes to contain aligned hidden perturbations
of respective sizes proportional to \(a_\mu\) and \(a_\pi\).  Thus, both sides
use their approximation budgets, and the resulting separation in the target
coefficient is proportional to \(a_\mu a_\pi\).

To obtain $s_\mu(a_\pi\wedge s_\mu)$, we instead take \(\mathcal G_\pi=\{0\}\), while setting
\(\mathcal F_\mu=\mathcal G_\mu\).  The hidden treatment component then lies
outside the treatment learner class and uses the approximation budget
\(a_\pi\), whereas the aligned outcome component lies inside the outcome
learner class and uses its stochastic budget \(s_\mu\).  Reversing the roles
of the two nuisances by taking \(\mathcal G_\mu=\{0\}\) and
\(\mathcal F_\pi=\mathcal G_\pi\) implies the second asymmetric term $s_\pi(a_\mu\wedge s_\pi)$.

Finally, to obtain $(s_\mu\wedge s_\pi)^2$,
we place the aligned hidden components inside both witnessing learner classes,
so both sides use stochastic rather than approximation budgets.  Their common
amplitude must satisfy both stochastic-complexity constraints and is therefore
limited by \(s_\mu\wedge s_\pi\).

These four experiments respectively capture approximation error on both
sides, approximation error only for the treatment regression, approximation
error only for the outcome regression, and finite-sample learning uncertainty
on both sides.  Together, they supply the components needed for the sharp
four-budget lower bound.

To establish sharpness, we next state the two-learner upper bound of
\citet{gu2026optimal} in our notation.  The explicit tail formulation below
is obtained by combining the paper's Corollary~3.2 with its finite-sample
oracle inequality and proof.

\begin{theorem}
[Corollary~3.2 of
\citet{gu2026optimal}]
\label{thm:upper_bound}
Consider the partial linear model of Section~\ref{sec:setup} under the
normalization \(c_0=3\) and \(\sigma=1/3\).  Suppose that
\[
a_\mu+a_\pi+s_\mu+s_\pi=o(1)
\ \text{as }n\to\infty.
\]
For \(n\geq2\), define the technical critical-radius floor
\[
\rho_n
:=
\sqrt{\frac{\log n}{n}},
\ 
\bar s_q
:=
s_q\vee\rho_n,
\ 
\bar\delta_q
:=
a_q+\bar s_q,
\ q\in\{\mu,\pi\},
\]
so $a_\mu+a_\pi+s_\mu+s_\pi=o(1)$ implies
\(\bar\delta_\mu+\bar\delta_\pi=o(1)\).  Set
\[
\bar\Gamma
:=
\min\left\{
a_\mu(a_\pi+\bar s_\pi)+(a_\mu+\bar s_\mu)^2,\,
a_\pi(a_\mu+\bar s_\mu)+(a_\pi+\bar s_\pi)^2
\right\}.
\]

Fix any $\mathcal F_\mu\in\mathcal H_n(a_\mu,s_\mu),
\ 
\mathcal F_\pi\in\mathcal H_n(a_\pi,s_\pi)$, and choose corresponding witnessing learner classes
\(\mathcal G_\mu\) and \(\mathcal G_\pi\).  Let
\(\widehat\beta^{\mathrm{raw}}_{\mathcal G_\mu,\mathcal G_\pi}\)
denote the two-learner estimator constructed by
\citet{gu2026optimal}, and define
\[
\widehat\beta_{\mathcal G_\mu,\mathcal G_\pi}
:=
\Pi_{[-3,3]}
\left(
\widehat\beta^{\mathrm{raw}}_{\mathcal G_\mu,\mathcal G_\pi}
\right).
\]
There exist constants \(C,c>0\), depending only on the fixed boundedness
and nondegeneracy constants, such that, for all sufficiently large \(n\)
and every \(t\in[1,cn]\), we have
\begin{align}
&\sup_{\mathbb P\in
\mathcal P(\mathcal F_\mu,\mathcal F_\pi)}
\mathbb P^{\otimes2n}
\left(
\left|
\widehat\beta_{\mathcal G_\mu,\mathcal G_\pi}
-\beta(\mathbb P)
\right|
>
C\left\{
\bar\Gamma
+\sqrt{\frac tn}
+\frac tn
\right\}
\right)\leq
C\left\{
e^{-t}
+e^{-n\bar s_\mu^2}
+e^{-n\bar s_\pi^2}
\right\}.
\label{eq:upper-uniform-tail}
\end{align}
The constants are uniform over the admissible target classes and their
chosen witnesses.  In particular, because
\(\bar s_q^2\geq(\log n)/n\), the right-hand side of
\eqref{eq:upper-uniform-tail} is at most $C\left(e^{-t}+\frac1n\right)$.

\end{theorem}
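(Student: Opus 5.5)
The plan is to re-derive the tail bound \eqref{eq:upper-uniform-tail} from the oracle inequality of \citet{gu2026optimal}. The proof treats the two orientations of the estimator separately and then lets the procedure choose the orientation whose bound is smaller. This choice is legitimate because the four budgets and the witnesses \(\mathcal G_\mu,\mathcal G_\pi\) are available to the estimator. I would first reduce to \(\widehat\beta^{\mathrm{raw}}\). Since \(\beta(\mathbb P)\in[-3,3]\), the projection \(\Pi_{[-3,3]}\) is a contraction toward the feasible set, so \(|\widehat\beta-\beta(\mathbb P)|\le|\widehat\beta^{\mathrm{raw}}-\beta(\mathbb P)|\) pointwise. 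It therefore suffices to prove the deviation bound for the raw estimator.

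\textbf{Orientation 1.} This orientation corresponds to the first entry of \(\bar\Gamma\). On the first half-sample, fit \(\widehat\pi\) by least squares over \(\mathcal G_\pi\). On the second half-sample, regress \(Y\) jointly on \(T-\widehat\pi(X)\) and on an additive term \(g\in\mathcal G_\mu\); that is, compute \((\widehat\beta,\widehat g)=\arg\min_{b,g}\sum_i\{Y_i-b(T_i-\widehat\pi(X_i))-g(X_i)\}^2\).

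\textbf{Step 1: nuisance rate.} Use the standard localized-complexity oracle inequality for least squares, with Bousquet/Talagrand concentration applied at radius \(\bar s_\pi\). It gives \(\|\widehat\pi-\pi\|_2\lesssim a_\pi+\bar s_\pi\) with probability at least \(1-Ce^{-n\bar s_\pi^2}\). The \(\mathcal R_n(\delta;\partial\mathcal G_\pi)\le\delta s_\pi\) condition enters exactly here, and the floor \(\rho_n\) guarantees that the exceptional probability is at most \(n^{-1}\).

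\textbf{Step 2: decomposition of \(\widehat\beta-\beta\).} Expand the normal equations for \(b\). The error splits into three pieces.
\begin{itemize}
\item A score term \(n^{-1}\sum(T-\widehat\pi)\varepsilon_Y\) divided by an empirical denominator. The denominator is bounded below by \(\sigma/2\) on an event of probability \(1-e^{-cn}\). The score term yields \(\sqrt{t/n}+t/n\) by Bernstein's inequality, conditional on the first fold.
\item A bias term \(\langle \mu+\beta\pi-\beta\widehat\pi-\widehat g,\ \pi-\widehat\pi\rangle\).
\item A second-stage stochastic term, coming from \(\widehat g\) being fitted over \(\mathcal G_\mu\) on the same fold.
\end{itemize}
For the bias term, the best approximant \(g^*\in\mathcal G_\mu\) absorbs part of the product structure. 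The cross term is then bounded by \(a_\mu\cdot\|\pi-\widehat\pi\|_2\lesssim a_\mu(a_\pi+\bar s_\pi)\). The remaining part is controlled by \(\|\widehat g-g^*\|_2^2\lesssim(a_\mu+\bar s_\mu)^2\), using a second localized oracle inequality for the partialled-out regression at radius \(\bar s_\mu\). This yields the first entry of \(\bar\Gamma\).

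\textbf{Orientation 2.} Swapping the roles of the two nuisances gives the second entry of \(\bar\Gamma\). The estimator then selects whichever of the two (known) bounds is smaller.

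\textbf{Step 3: union bound.} Combine the events from Steps 1 and 2 via a union bound, with \(t\in[1,cn]\) so that the Bernstein tail stays in its sub-Gaussian/sub-exponential regime. Finally, use the hypothesis \(a_\mu+a_\pi+s_\mu+s_\pi=o(1)\) to guarantee that, for large \(n\), the nuisance errors are small enough that the empirical denominator stays bounded away from zero. Uniformity over the target classes follows because every constant depends only on \(c_0=3\) and \(\sigma=1/3\).

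\textbf{Main obstacle.} I expect the second-stage analysis in Step 2 to be the hardest part: \(\widehat g\) and \(\widehat\beta\) are fitted jointly on the same fold and depend on \(\widehat\pi\). My first attempt would be to condition on the first fold and treat the class \(\{b(T-\widehat\pi)+g: g\in\mathcal G_\mu\}\) as a fixed class. Its localized complexity is that of \(\partial\mathcal G_\mu\) plus a one-dimensional term. With this, one would show that the orthogonality defect \(\langle \widehat g-g^*,\pi-\widehat\pi\rangle\) contributes only at the quadratic order \((a_\mu+\bar s_\mu)^2\), and not the cruder \(\bar s_\mu(a_\pi+\bar s_\pi)\).
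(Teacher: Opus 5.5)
The paper does not prove this statement. It imports it from Corollary~3.2 of \citet{gu2026optimal}, reading the tail form off that paper's oracle inequality. Your derivation therefore has to stand on its own, and it has two genuine gaps.

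First, your Orientation~1 regression targets the wrong function. Here $\mu=\mathbb E[Y-\beta T\mid X]$, so $Y=\beta(T-\widehat\pi)+\{\mu+\beta\widehat\pi\}+\varepsilon_Y$. In $\min_{b,g}\sum_i\{Y_i-b(T_i-\widehat\pi(X_i))-g(X_i)\}^2$ the function $g$ must therefore approximate $\mu+\beta\widehat\pi$, not $\mu$. The normal equation in $b$ produces the bias term $\langle\pi-\widehat\pi,\ \mu+\beta\widehat\pi-\widehat g\rangle_n$, not the expression you wrote, and $\mathcal G_\mu$ has no reason to approximate $\beta\widehat\pi$.

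A concrete failure: take $\mathcal F_\mu=\mathcal G_\mu=\{0\}$, $\mathcal G_\pi=\{1\}$, $\mathcal F_\pi=\{1+a_\pi\}$, $s_\mu=s_\pi=0$, $a_\pi=n^{-1/4}$ and $\beta=1$. Then $\bar\Gamma=\rho_n^2$ (the first entry), so your rule selects Orientation~1. But $g\equiv0$ and $\widehat\pi\equiv1$ are forced, and $\widehat\beta=\beta\{a_\pi(1+a_\pi)+v\}/(a_\pi^2+v)+O_{\mathbb P}(n^{-1/2})$ with $v=\mathbb E\varepsilon_T^2\in[1/3,9]$. This is a bias of order $a_\pi$, which violates \eqref{eq:upper-uniform-tail} at $t=\log n$.

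The fix is to untie the coefficient on $\widehat\pi$, e.g.\ least squares over $\{bT+c\widehat\pi+g:\ b,c\in\mathbb R,\ g\in\mathcal G_\mu\}$. A short profiling argument then gives the population bias bound $O(a_\mu\|\pi-\widehat\pi\|_2)$ that you want.

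Second, even after that repair, the decisive step is only asserted. Splitting at $g^*$ leaves the term $\langle\pi-\widehat\pi,\ g^*-\widehat g\rangle_n$. With the nuisance rates alone, Cauchy--Schwarz bounds it only by $(a_\pi+\bar s_\pi)(a_\mu+\bar s_\mu)$, which reproduces the DML product $\bar\delta_\mu\bar\delta_\pi$. The entire improvement of $\bar\Gamma$ over that product is the claim that this term is $O((a_\mu+\bar s_\mu)^2)$. For example, with $a_\mu=0$ and $\bar s_\mu\ll a_\pi+\bar s_\pi$, the first entry is $\bar s_\mu^2$ while the product is $\bar s_\mu(a_\pi+\bar s_\pi)$.

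Your suggested device does not deliver this. Conditioning on the first fold and using the complexity of $\{b(T-\widehat\pi)+g\}$ controls multiplier terms such as $\langle\varepsilon_T,\ g^*-\widehat g\rangle_n$ at order $\bar s_\mu(a_\mu+\bar s_\mu)$. It says nothing about how $\widehat g-g^*$ aligns with the fixed direction $\pi-\widehat\pi$. For a nonconvex $\mathcal G_\mu$, no first-order condition in $g$ forces near-orthogonality. The basic inequality for the joint least-squares problem controls $(\widehat\beta-\beta)^2$ only at the excess-risk scale, so by itself it gives $|\widehat\beta-\beta|$ of order $\bar s_\mu$ rather than $\bar s_\mu^2$.

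This second-order mechanism is exactly what the cited result of \citet{gu2026optimal} supplies and what the paper imports. Without it, your argument yields at most a DML-type product bound, not \eqref{eq:upper-uniform-tail}.
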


The clipping operation is to control the contribution of
the failure event when the tail bound is integrated to obtain a
mean-squared error guaranty and does not increase estimation error. Finally, our next corollary shows that these two theorems together completely characterize the sharp structure-agnostic minimax risk for partial linear models.

\begin{corollary}
\label{cor:final_characterization}
Under the normalization \(c_0=3\) and \(\sigma=1/3\), consider any sequence
of budgets satisfying
\[
a_\mu+a_\pi+s_\mu+s_\pi=o(1)
\ \text{as }n\to\infty.
\]
Define
\[
B
:=
a_\mu a_\pi
+
\min\left\{
a_\pi s_\mu+s_\pi^2,\,
a_\mu s_\pi+s_\mu^2
\right\}
\]
and
\[
\Gamma
:=
\min\left\{
a_\mu(a_\pi+s_\pi)+(a_\mu+s_\mu)^2,\,
a_\pi(a_\mu+s_\mu)+(a_\pi+s_\pi)^2
\right\}.
\]
Then we have $\Gamma\leq4B$.
Moreover, with
\[
\rho_n=\sqrt{\frac{\log n}{n}},
\ 
\bar s_q=s_q\vee\rho_n,
\ q\in\{\mu,\pi\},
\]
the quantity \(\bar\Gamma\) in Theorem~\ref{thm:upper_bound} satisfies that
\[
\bar\Gamma
\leq
8\left(B+\rho_n^2\right)
=
8\left(B+\frac{\log n}{n}\right).
\]
Consequently, for all sufficiently large \(n\),
\[
\mathcal E\!\left(
n,\{a_q,s_q\}_{q\in\{\mu,\pi\}}
\right)
\asymp
1\wedge
\left\{
\frac1n+
\left(
a_\mu a_\pi
+
\min\left\{
a_\pi s_\mu+s_\pi^2,\,
a_\mu s_\pi+s_\mu^2
\right\}
\right)^2
\right\}.
\]
The comparison constants depend only on the fixed 
constants that describe the model.
\end{corollary}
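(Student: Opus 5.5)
The corollary has three parts: the deterministic inequality \(\Gamma\le 4B\), its perturbed version \(\bar\Gamma\le 8(B+\rho_n^2)\), and the two-sided rate. The plan is to prove them in that order. The rate then follows by pairing Theorem~\ref{thm:lower_bound}, for the lower side, with an integrated form of the tail bound in Theorem~\ref{thm:upper_bound}, for the upper side.

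\emph{Step 1: \(\Gamma\le4B\).} I would expand both branches of \(\Gamma\) using \((u+v)^2\le 2u^2+2v^2\):
\[
\Gamma_1\le a_\mu a_\pi+a_\mu s_\pi+2a_\mu^2+2s_\mu^2,\qquad
\Gamma_2\le a_\mu a_\pi+a_\pi s_\mu+2a_\pi^2+2s_\pi^2 .
\]
Then I split according to which branch attains the minimum in \(B\). Suppose first that \(B=a_\mu a_\pi+a_\mu s_\pi+s_\mu^2\). The only term of \(\Gamma_1\) not already dominated by \(B\) is \(a_\mu^2\).
\begin{itemize}
\item If \(a_\mu\le a_\pi\), then \(a_\mu^2\le a_\mu a_\pi\).
\item If \(a_\mu\le s_\pi\), then \(a_\mu^2\le a_\mu s_\pi\).
\item If \(a_\mu>a_\pi\vee s_\pi\), I switch to \(\Gamma_2\). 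There \(a_\pi^2\le a_\mu a_\pi\) and \(s_\pi^2\le a_\mu s_\pi\). The cross term satisfies \(a_\pi s_\mu\le \tfrac12(a_\pi^2+s_\mu^2)\le \tfrac12(a_\mu a_\pi+s_\mu^2)\).
\end{itemize}
Each subcase gives \(\Gamma\le 4B\) after collecting constants. The other branch of \(B\) is handled by the symmetric argument with \(\mu\leftrightarrow\pi\).

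\emph{Step 2: \(\bar\Gamma\le 8(B+\rho_n^2)\).} Applying Step~1 with \(s_q\) replaced by \(\bar s_q\) gives \(\bar\Gamma\le 4\bar B\), where \(\bar B\) is \(B\) with \(\bar s_q\) in place of \(s_q\). Since \(\bar s_q\le s_q+\rho_n\), the squared terms cost at most \(2s_q^2+2\rho_n^2\).

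The delicate point is the mixed terms \(a_\pi\rho_n\) and \(a_\mu\rho_n\), which are not bounded by \(B+\rho_n^2\) on their own. Here I must exploit the minimum. When \(a_\pi\ge\rho_n\), the quantity \(a_\pi\rho_n\) lives in the branch where \(\bar s_\pi^2\) can be traded against the other branch. Alternatively, I redo the case analysis of Step~1 directly with the \(\bar s_q\), comparing \(a_q\) with \(\rho_n\) as an extra case split. In every case the aim is to land in a branch of \(\bar\Gamma\) whose terms are each \(\lesssim B+\rho_n^2\).

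I expect this bookkeeping to be the main obstacle. If it fails for a literal constant \(8\), I would fall back on what the rate claim actually needs, namely \(\bar\Gamma\lesssim B+\rho_n^2\) with some universal constant.

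\emph{Step 3: the rate.} The lower bound is exactly Theorem~\ref{thm:lower_bound}; the budgets lie in \([0,3]^4\) for large \(n\) because they are \(o(1)\).

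For the upper bound, fix any admissible target classes and their witnesses and use the clipped estimator of Theorem~\ref{thm:upper_bound}. I integrate the tail bound~\eqref{eq:upper-uniform-tail} over \(t\in[1,cn]\) via \(\mathbb E Z^2=\int 2u\,\mathbb P(Z>u)\,du\). On the residual event, whose probability is at most \(C(e^{-cn}+1/n)\), the clipped error is at most \(6\). This gives
\[
\mathbb E\{\widehat\beta-\beta(\mathbb P)\}^2\lesssim \bar\Gamma^2+\frac1n .
\]
By Step~2, \(\bar\Gamma^2\lesssim B^2+(\log n)^2/n^2\le B^2+1/n\) for large \(n\). Clipping also gives the trivial bound \(36\), so the risk is \(\lesssim 1\wedge\{1/n+B^2\}\).

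The bound is uniform over \(\mathbb P\) and over the classes, so taking the suprema in~\eqref{eq:structure-agnostic-risk} yields the upper side of the rate and hence \(\asymp\).
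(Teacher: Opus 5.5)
Your Step~1 is correct and differs from the paper only in the case split. The paper compares $a_\mu+s_\mu$ with $a_\pi+s_\pi$, while you compare $a_\mu$ with $a_\pi\vee s_\pi$. Your version even gives $\Gamma\le 3.5B$ without using which branch of $B$ is active. Your Step~3 is essentially the paper's layer-cake integration. The genuine gap is Step~2. You correctly identify the mixed terms $a_\pi\bar s_\mu$ and $a_\mu\bar s_\pi$ as the obstruction, but you never bound them. The sentence ``the quantity $a_\pi\rho_n$ lives in the branch where $\bar s_\pi^2$ can be traded against the other branch'' is not an argument, and your fallback $\bar\Gamma\lesssim B+\rho_n^2$ is equally unproved. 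The upper side of the rate in Step~3 rests on exactly this bound. So as written, the proposal establishes neither the second claim of the corollary nor the $\lesssim$ half of the final $\asymp$.

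The missing idea is to let the active branch of $B$ choose the branch of $\bar B$, and then use the strict branch inequality itself to absorb the floor. After $\bar\Gamma\le 4\bar B$, split on whether each $s_q$ lies below $\rho_n$.
\begin{itemize}
\item If both do, then $\bar B=a_\mu a_\pi+\rho_n^2+\rho_n(a_\mu\wedge a_\pi)$, and $\rho_n(a_\mu\wedge a_\pi)\le\frac12(\rho_n^2+a_\mu a_\pi)$.
\item If $s_\mu<\rho_n\le s_\pi$ and $B$ is attained on the $\mu$-branch, then the $\mu$-branch of $\bar B$ is $a_\mu s_\pi+\rho_n^2$, which is at most that branch of $B$ plus $\rho_n^2$.
\item If $s_\mu<\rho_n\le s_\pi$ and $B$ is attained on the $\pi$-branch, the $\pi$-branch of $\bar B$ costs an extra $a_\pi(\rho_n-s_\mu)$. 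When $a_\pi\le s_\mu$ this is at most $\rho_n^2$. When $a_\pi>s_\mu$, the inequality $a_\pi s_\mu+s_\pi^2<a_\mu s_\pi+s_\mu^2$ gives $a_\mu s_\pi>s_\pi^2+s_\mu(a_\pi-s_\mu)\ge s_\pi^2$. Hence $a_\mu>s_\pi\ge\rho_n$ and $a_\pi\rho_n\le a_\mu a_\pi\le B$.
\end{itemize}
The remaining case is symmetric, so $\bar B\le 2(B+\rho_n^2)$, which yields the stated constant $8$.

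If you only need the rate, there is a shorter route through Lemma~\ref{lem:two-sided-truncation}. It shows that $B$ lies between $m$ and $5m$, where $m:=\max\{a_\mu a_\pi,\,s_\mu(a_\pi\wedge s_\mu),\,s_\pi(a_\mu\wedge s_\pi),\,(s_\mu\wedge s_\pi)^2\}$. Replacing $s_q$ by $\bar s_q$ turns each truncated term into at most the maximum of its old value and $\rho_n^2$. This gives $\bar\Gamma\le 20(B+\rho_n^2)$, which suffices for the rate but not for the literal constant $8$ in the statement.
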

Therefore, the optimal $L_1$ error rate is
\begin{align*}
\sup_{\substack{\mathcal F_\mu\in
\mathcal H_n(a_\mu,s_\mu),\\
\mathcal F_\pi\in
\mathcal H_n(a_\pi,s_\pi)
}}
\inf_{\widehat\beta}
\sup_{\mathbb P\in
\mathcal P(\mathcal F_\mu,\mathcal F_\pi)}\EE_{\PP^{\otimes 2n}}|\hat{\beta}-\beta(\PP)|\asymp1\wedge
\left\{
\frac1{\sqrt n}
+a_\mu a_\pi
+\min\left\{
a_\pi s_\mu+s_\pi^2,\,
a_\mu s_\pi+s_\mu^2
\right\}
\right\}.
\end{align*}
Thus, the terms retained in the four-budget expression are unavoidable up
to constants, whereas additional interactions in the standard DML product
\((a_\mu+s_\mu)(a_\pi+s_\pi)\) are not intrinsic statistical barriers.
The characterization also gives a target-oriented principle for learner
selection: minimizing \(a_q+s_q\) separately for the two nuisance functions
need not minimize the error for \(\beta\).  Candidate learner classes should
instead be compared jointly through the displayed four-budget remainder, so
that a reduction in approximation error is weighed against the corresponding
increase in mixed and quadratic stochastic error.

\section{Discussion}
\label{sec:discussion}

This paper completes the open problem on the sharp characterization of structure-agnostic minimax risk posed by
\citet{gu2025open} for partial linear models. We use the existing upper bound and establish the corresponding matching non-asymptotic. In
particular, combining our lower bound with the upper bound of
\citet{gu2026optimal} yields
\[
\mathcal E\!\left(n,\{a_q,s_q\}_{q\in\{\mu,\pi\}}\right)
\asymp
1\wedge
\left\{
\frac1n+
\left(
a_\mu a_\pi+
\min\left\{
a_\pi s_\mu+s_\pi^2,\,
a_\mu s_\pi+s_\mu^2
\right\}
\right)^2
\right\}
\]
whenever \(a_\mu+a_\pi+s_\mu+s_\pi=o(1)\). The result shows that the
four budgets enter the target-estimation problem through more than the two
learners' total prediction errors: simultaneous approximation error produces
the term \(a_\mu a_\pi\), whereas the two asymmetric combinations inside the
minimum represent alternative ways to trade approximation against stochastic
complexity. Our four testing experiments show that, in the
structure-agnostic worst case, these interactions are unavoidable statistical
obstructions rather than artifacts of a particular estimator or proof
technique. Thus, the characterization both closes the gap between the
previous upper and lower bounds and provides a target-oriented criterion for
comparing learner classes: a learner that is better for nuisance prediction
need not be better for estimating \(\beta_0\), because its approximation gain
must be evaluated jointly with the stochastic complexity of both learners.

Several directions follow from this characterization. The most immediate task is
to determine sharp structure-agnostic minimax risks beyond the partial linear
model, for example, for treatment effects, missing data, policy values, and other
functionals estimated by general double machine learning procedures. With
more than two nuisance components, the essential question is how the
second-order interaction structure of an orthogonal score, together with the
approximation and stochastic budgets of the available learners, determines
the analog of the minimum appearing above. A second direction is adaptation. In many applications, the four budgets are
unknown, and each nuisance may have a library of candidate learners. Therefore, one 
would like a data-driven selection or aggregation procedure. This may be connected to data driven model evaluation methods such as wild-refitting \citep{wainwright2025wild,hu2025perturbing,hu2026interleaved,ni2026upper}. Finally, it remains to
determine how robust the phase diagram is to relaxing the present
uniform design, boundedness, and treatment non-degeneracy assumptions,
including nonuniform covariate distributions, sub-Gaussian or heavy-tailed
noise, and weakly identified treatment mechanisms.

\bibliographystyle{plainnat}
\bibliography{refs}

@misc{foster2023orthogonalstatisticallearning,
      title={Orthogonal Statistical Learning}, 
      author={Dylan J. Foster and Vasilis Syrgkanis},
      year={2023},
      eprint={1901.09036},
      archivePrefix={arXiv},
      primaryClass={math.ST},
      url={https://arxiv.org/abs/1901.09036}, 
}

@article{wainwright2025wild,
  title={Wild refitting for black box prediction},
  author={Wainwright, Martin J},
  journal={arXiv preprint arXiv:2506.21460},
  year={2025}
}

@article{gu2026optimal,
  title={Optimal use of a black-box learner in semiparametric estimation},
  author={Gu, Yihong},
  journal={arXiv preprint arXiv:2607.21541},
  year={2026}
}

@article{balakrishnan2026fundamental,
  title={The fundamental limits of structure-agnostic functional estimation},
  author={Balakrishnan, Sivaraman and Kennedy, Edward and Wasserman, Larry},
  journal={Statistical Science},
  volume={41},
  number={3},
  pages={659--670},
  year={2026},
  publisher={Institute of Mathematical Statistics}
}

@article{chernozhukov2018double,
  author  = {Chernozhukov, Victor and Chetverikov, Denis and Demirer, Mert
             and Duflo, Esther and Hansen, Christian and Newey, Whitney
             and Robins, James},
  title   = {Double/debiased machine learning for treatment and structural parameters},
  journal = {The Econometrics Journal},
  volume  = {21},
  number  = {1},
  pages   = {C1--C68},
  year    = {2018}
}

@inproceedings{gu2025open,
  author    = {Gu, Yihong},
  title     = {Open Problem: Structure-Agnostic Minimax Risk for Partial Linear Model},
  booktitle = {Proceedings of Thirty Eighth Conference on Learning Theory},
  editor    = {Haghtalab, Nika and Moitra, Ankur},
  series    = {Proceedings of Machine Learning Research},
  volume    = {291},
  pages     = {6220--6224},
  publisher = {PMLR},
  year      = {2025}

}

@article{donald1994series,
  author  = {Donald, Stephen G. and Newey, Whitney K.},
  title   = {Series estimation of semilinear models},
  journal = {Journal of Multivariate Analysis},
  volume  = {50},
  number  = {1},
  pages   = {30--40},
  year    = {1994}
 
}

@inproceedings{jin2025structure,
  author    = {Jin, Jikai and Syrgkanis, Vasilis},
  title     = {Structure-agnostic Optimality of Doubly Robust Learning for Treatment Effect Estimation (Extended Abstract)},
  booktitle = {Proceedings of Thirty Eighth Conference on Learning Theory},
  editor    = {Haghtalab, Nika and Moitra, Ankur},
  series    = {Proceedings of Machine Learning Research},
  volume    = {291},
  pages     = {3159--3160},
  publisher = {PMLR},
  year      = {2025}
}

@misc{jin2026sharp,
  author        = {Jin, Jikai and Syrgkanis, Vasilis},
  title         = {Sharp Structure-Agnostic Lower Bounds for General Linear Functional Estimation},
  year          = {2026},
  howpublished  = {arXiv preprint arXiv:2512.17341},
  eprint        = {2512.17341},
  archivePrefix = {arXiv},
  primaryClass  = {stat.ML}
}

@misc{gu2026taming,
  author        = {Gu, Yihong and Yin, Qishuo and Cai, Tianxi and Fan, Jianqing},
  title         = {Optimally taming biases in black-box models for efficient semiparametric estimation},
  year          = {2026},
  howpublished  = {arXiv preprint arXiv:2606.06368},
  eprint        = {2606.06368},
  archivePrefix = {arXiv},
  primaryClass  = {math.ST}
}

@article{robinson1988root,
  author  = {Robinson, Peter M.},
  title   = {Root-$N$-Consistent Semiparametric Regression},
  journal = {Econometrica},
  year    = {1988},
  volume  = {56},
  number  = {4},
  pages   = {931--954}
}

@article{speckman1988kernel,
  author  = {Speckman, Paul},
  title   = {Kernel Smoothing in Partial Linear Models},
  journal = {Journal of the Royal Statistical Society: Series B (Methodological)},
  year    = {1988},
  volume  = {50},
  number  = {3},
  pages   = {413--436}
}

@article{belloni2014inference,
  author  = {Belloni, Alexandre and Chernozhukov, Victor and Hansen, Christian},
  title   = {Inference on Treatment Effects after Selection among High-Dimensional Controls},
  journal = {The Review of Economic Studies},
  year    = {2014},
  volume  = {81},
  number  = {2},
  pages   = {608--650}
}

@article{chernozhukov2022locally,
  author  = {Chernozhukov, Victor and Escanciano, Juan Carlos and Ichimura, Hidehiko and Newey, Whitney K. and Robins, James M.},
  title   = {Locally Robust Semiparametric Estimation},
  journal = {Econometrica},
  year    = {2022},
  volume  = {90},
  number  = {4},
  pages   = {1501--1535}
}

@article{chernozhukov2022automatic,
  author  = {Chernozhukov, Victor and Newey, Whitney K. and Singh, Rahul},
  title   = {Automatic Debiased Machine Learning of Causal and Structural Effects},
  journal = {Econometrica},
  year    = {2022},
  volume  = {90},
  number  = {3},
  pages   = {967--1027}
}

@inproceedings{mackey2018orthogonal,
  author    = {Mackey, Lester and Syrgkanis, Vasilis and Zadik, Ilias},
  title     = {Orthogonal Machine Learning: Power and Limitations},
  booktitle = {Proceedings of the 35th International Conference on Machine Learning},
  series    = {Proceedings of Machine Learning Research},
  volume    = {80},
  pages     = {3375--3383},
  year      = {2018},
  publisher = {PMLR}
}

@incollection{robins2008higher,
  author    = {Robins, James and Li, Lingling and Tchetgen, Eric and van der Vaart, Aad},
  title     = {Higher Order Influence Functions and Minimax Estimation of Nonlinear Functionals},
  booktitle = {Probability and Statistics: Essays in Honor of David A. Freedman},
  series    = {Institute of Mathematical Statistics Collections},
  volume    = {2},
  pages     = {335--421},
  year      = {2008},
  publisher = {Institute of Mathematical Statistics}
}

@article{robins2017minimax,
  author  = {Robins, James M. and Li, Lingling and Mukherjee, Rajarshi and Tchetgen Tchetgen, Eric and van der Vaart, Aad},
  title   = {Minimax Estimation of a Functional on a Structured High-Dimensional Model},
  journal = {The Annals of Statistics},
  year    = {2017},
  volume  = {45},
  number  = {5},
  pages   = {1951--1987}
}

@inproceedings{jin2025hard,
  author    = {Jin, Jikai and Mackey, Lester and Syrgkanis, Vasilis},
  title     = {It's Hard to Be Normal: The Impact of Noise on Structure-Agnostic Estimation},
  booktitle = {Advances in Neural Information Processing Systems},
  volume    = {38},
  year      = {2025}
  }

@article{hu2026interleaved,
  title={Interleaved Resampling and Refitting: Data and Compute-Efficient Evaluation of Black-Box Predictors},
  author={Hu, Haichen and Simchi-Levi, David},
  journal={arXiv preprint arXiv:2603.14218},
  year={2026}
}

@article{hu2025perturbing,
  title={Perturbing the Derivative: Doubly Wild Refitting for Model-Free Evaluation of Opaque Machine Learning Predictors},
  author={Hu, Haichen and Simchi-Levi, David},
  journal={arXiv preprint arXiv:2511.18789},
  year={2025}
}

@article{ni2026upper,
  title={Upper Confidence Bounds for the Prediction Error of Kernel Ridge Regression via Gaussian Refitting},
  author={Ni, Yijin and Huo, Xiaoming},
  journal={arXiv preprint arXiv:2607.28846},
  year={2026}
}

@article{engle1986semiparametric,
  author  = {Engle, Robert F. and Granger, C. W. J. and Rice, John and Weiss, Andrew},
  title   = {Semiparametric Estimates of the Relation between Weather and Electricity Sales},
  journal = {Journal of the American Statistical Association},
  volume  = {81},
  number  = {394},
  pages   = {310--320},
  year    = {1986}
}

@article{heckman1986spline,
  author  = {Heckman, Nancy E.},
  title   = {Spline Smoothing in a Partly Linear Model},
  journal = {Journal of the Royal Statistical Society: Series B (Methodological)},
  volume  = {48},
  number  = {2},
  pages   = {244--248},
  year    = {1986}
}

@article{chen1988convergence,
  author  = {Chen, Hung},
  title   = {Convergence Rates for Parametric Components in a Partly Linear Model},
  journal = {The Annals of Statistics},
  volume  = {16},
  number  = {1},
  pages   = {136--146},
  year    = {1988}
}

@article{yatchew1997elementary,
  author  = {Yatchew, Adonis},
  title   = {An Elementary Estimator of the Partial Linear Model},
  journal = {Economics Letters},
  volume  = {57},
  number  = {2},
  pages   = {135--143},
  year    = {1997}
}

@article{zhu2017nonasymptotic,
  author  = {Zhu, Ying},
  title   = {Nonasymptotic Analysis of Semiparametric Regression Models with High-Dimensional Parametric Coefficients},
  journal = {The Annals of Statistics},
  volume  = {45},
  number  = {5},
  pages   = {2274--2298},
  year    = {2017}
}

@inproceedings{zhu2019highdimensional,
  author    = {Zhu, Ying and Yu, Zhuqing and Cheng, Guang},
  title     = {High Dimensional Inference in Partially Linear Models},
  booktitle = {Proceedings of the Twenty-Second International Conference on Artificial Intelligence and Statistics},
  series    = {Proceedings of Machine Learning Research},
  volume    = {89},
  pages     = {2760--2769},
  publisher = {PMLR},
  year      = {2019}
}

@incollection{neyman1959optimal,
  author    = {Neyman, Jerzy},
  title     = {Optimal Asymptotic Tests of Composite Statistical Hypotheses},
  booktitle = {Probability and Statistics: The Harald Cram\'{e}r Volume},
  editor    = {Grenander, Ulf},
  publisher = {Almqvist \& Wiksell},
  address   = {Uppsala},
  pages     = {213--234},
  year      = {1959}
}

@article{newey1994asymptotic,
  author  = {Newey, Whitney K.},
  title   = {The Asymptotic Variance of Semiparametric Estimators},
  journal = {Econometrica},
  volume  = {62},
  number  = {6},
  pages   = {1349--1382},
  year    = {1994}
}

@misc{newey2018crossfitting,
  author        = {Newey, Whitney K. and Robins, James R.},
  title         = {Cross-Fitting and Fast Remainder Rates for Semiparametric Estimation},
  year          = {2018},
  eprint        = {1801.09138},
  archivePrefix = {arXiv},
  primaryClass  = {math.ST}
}

@article{chernozhukov2022riesz,
  author  = {Chernozhukov, Victor and Newey, Whitney K. and Singh, Rahul},
  title   = {Debiased Machine Learning of Global and Local Parameters Using Regularized {Riesz} Representers},
  journal = {The Econometrics Journal},
  volume  = {25},
  number  = {3},
  pages   = {576--601},
  year    = {2022}
}

@article{chernozhukov2023simple,
  author  = {Chernozhukov, Victor and Newey, Whitney K. and Singh, Rahul},
  title   = {A Simple and General Debiased Machine Learning Theorem with Finite-Sample Guarantees},
  journal = {Biometrika},
  volume  = {110},
  number  = {1},
  pages   = {257--264},
  year    = {2023}
}

@article{liu2021adaptive,
  author  = {Liu, Lin and Mukherjee, Rajarshi and Robins, James M. and {Tchetgen Tchetgen}, Eric J.},
  title   = {Adaptive Estimation of Nonparametric Functionals},
  journal = {Journal of Machine Learning Research},
  volume  = {22},
  number  = {99},
  pages   = {1--66},
  year    = {2021}
}

@misc{bonvini2024doubly,
  author        = {Bonvini, Matteo and Kennedy, Edward H. and Dukes, Oliver and Balakrishnan, Sivaraman},
  title         = {Doubly-Robust Inference and Optimality in Structure-Agnostic Models with Smoothness},
  year          = {2024},
  eprint        = {2405.08525},
  archivePrefix = {arXiv},
  primaryClass  = {stat.ME}
}

@misc{liu2026inadmissibility,
  author        = {Liu, Lin and Mukherjee, Rajarshi and Robins, James M.},
  title         = {On the Asymptotic Inadmissibility of Double Machine Learning Estimators Under Structure-Agnostic Models},
  year          = {2026},
  eprint        = {2606.22391},
  archivePrefix = {arXiv},
  primaryClass  = {math.ST}
}

@article{dukes2024doublyrobust,
  author  = {Dukes, Oliver and Vansteelandt, Stijn and Whitney, David},
  title   = {On Doubly Robust Inference for Double Machine Learning in Semiparametric Regression},
  journal = {Journal of Machine Learning Research},
  volume  = {25},
  number  = {279},
  pages   = {1--46},
  year    = {2024}
}
\appendix
\clearpage
\section{Omitted Proofs in Section \ref{sec:thms}}\label{app:proofs_thms}
\begin{proof}[Proof of Theorem \ref{thm:lower_bound}]
The proof of Theorem \ref{thm:lower_bound} relies on a sequence of lemmas and the construction of a special model class.

For $x=(x_1,\cdots,x_d)\in[0,1]^d$, we write the canonical binary expansion of $x_1$ as
\[
x_1=\sum_{l=1}^{\infty}B_l(x_1)2^{-l}.
\]
Define $\phi_j(x):=(-1)^{B_j(x_1)}$, $j=1,2,\cdots$.
Since we know that $X$ satisfies the uniform distribution $\nu_d$. We have that $\cbr{B_j(X_1)}_{j=1}^{\infty}$ is a sequence of independent Bernoulli random variables $\text{Bern}(1/2)$. Thus, we have that 
\[
\phi_j\cbr{-1,1},\ \EE_X[\phi_j(X)]=0,\ \EE_X[\phi_j(X)\phi_k(X)]=\II(j=k).
\]
For any $t\ge 0$ and $M\in\ZZ_+$, we define $\cG(t,M)$ to be the code class $\cbr{0,t\phi_1,t\phi_2,\cdots,t\phi_M}$.
\begin{lemma}\label{lem:Rad_com_G(t,M)}
    For every $t\ge 0$, $M\ge 1$, and $\delta>0$,
    \[
    \cR_n(\delta,\partial \cG(t,M))\le 4t\sqrt{\frac{\log(M+1)}{n}}.
    \]
\end{lemma}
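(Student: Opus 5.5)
The plan is to drop the localization entirely, bound the Rademacher average over the whole finite class $\partial\cG(t,M)$, and then apply Massart's finite-class lemma. Since removing the constraint $\|w\|_2\le\delta$ can only enlarge the supremum, we have
\[
\cR_n(\delta;\partial\cG(t,M))\le \EE_{X_{1:n},\xi_{1:n}}\Big[\max_{w\in\partial\cG(t,M)}\frac1n\sum_{i=1}^n\xi_i w(X_i)\Big].
\]
The resulting bound is uniform in $\delta$, which is why the lemma holds for every $\delta>0$.

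First I describe the difference class. Write $\phi_0:=0$. Then
\[
\partial\cG(t,M)=\{t(\phi_j-\phi_k):0\le j,k\le M\}.
\]
This is a finite class containing $0$, and its cardinality is at most $(M+1)^2$. Each of its elements satisfies $\|w\|_\infty\le 2t$, because $|\phi_j|\le1$.

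Next, condition on $X_{1:n}$ and apply Massart's lemma. Every vector $(w(X_i))_{i\le n}$ has Euclidean norm at most $2t\sqrt n$. Massart's lemma then gives
\[
\EE_\xi\Big[\max_w\frac1n\sum_i\xi_i w(X_i)\Big]\le \frac{2t\sqrt n\,\sqrt{2\log((M+1)^2)}}{n}=4t\sqrt{\frac{\log(M+1)}{n}}.
\]
The last equality uses $\sqrt{2\cdot 2\log(M+1)}=2\sqrt{\log(M+1)}$. Taking the expectation over $X_{1:n}$ completes the proof.

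The only delicate step is getting the constant exactly $4$. The route above gives precisely $4$, so the argument is fine. As an alternative, I could split the supremum as $\sup_j t\xi\cdot\phi_j+\sup_k(-t\xi\cdot\phi_k)$. Each piece is a maximum over $M+1$ vectors of norm $t\sqrt n$, so this gives $2t\sqrt{2\log(M+1)/n}\le 4t\sqrt{\log(M+1)/n}$, which is even slightly sharper. In either version, the $M=1$ case needs no special treatment, since $\log 2>0$.
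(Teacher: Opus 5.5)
Your proposal is correct and follows essentially the same route as the paper: drop the localization, bound the maximum over the at most $(M+1)^2$ differences using the envelope $2t$, and apply a finite-class maximal inequality. The only difference is that the paper proves Massart's lemma inline rather than citing it, via a $\cosh$ moment-generating-function bound, a sum over all pairs $(j,k)$, and the optimized choice $\lambda=\sqrt{n\log(M+1)}/t$, which gives exactly your constant $4$.
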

Let $\cP_{\text{PLM}}$ denote the collection of all probability laws that satisfy the partial linear model and the other regularization assumptions in Section \ref{sec:setup}, i.e.,
\[
\cP_{\text{PLM}}:=\cbr{\PP_{(X,T,Y)}: \exists \pi\in\cT_3,\mu\in\cT_3,|\beta|\le 3\ s.t.\ X\sim\nu_d, T=\pi(X)+\varepsilon_T,\ Y=\beta T+\mu(X)+\varepsilon_Y}
\]
Equivalently, we have that 
\[
\cP(\cF_{\mu},\cF_{\pi})=\cbr{\PP\in \cP_{\text{PLM}}: \mu\in\cF_{\mu},\pi\in\cF_{\pi}}.
\]
To prove the concrete lower bound, we first need to construct a finite subset of $\cP_{\text{PLM}}$ whose treatment-effect values are hard to distinguish. Thus, for any $M\in\ZZ^+$, we define the following data generating distributions $P_*,P_1,\cdots,P_M$.

First, let $Q$ be the joint distribution of $(X,T,Y)$ such that $X\sim\nu_d$ and $T,Y$ are independent Rademacher random variables that are independent of $X$. We fix $0\le\alpha\le\frac{1}{8}$, $0\le r\le\frac{1}{8}$, $b=\alpha r$.

We now define the Radon-Nikodym derivative between $P_*,P_1,\cdots,P_M$ and $Q$ to determine the data generation distributions.
Specifically, we define 
\[
\frac{dP_*}{dQ}(x,t,y);=1+bty,\ \frac{dP_j}{dQ}(x,t,y):=\{1+\alpha t\phi_j(x)\}\{1+ry\phi_j(x)\}.
\]
By direct algebra, we have that $\EE_Q[1+bTY]=1+0=1$.
Thus, $P_*$ is a joint density. Moreover, by independence of $T,Y$, we have
\begin{align*}
    \EE_Q[1+\alpha T\phi_j(X)|X]=\EE_Q[1+rY\phi_j(X)|X]=1.
\end{align*}
Hence, $P_1,\cdots,P_M$ are joint densities.

Now, we identify their parameters in the partial linear model.
For $P_*$, we have that
\[
\EE_{P^*}[T|X=x]=1P_*(T=1|X=x)+(-1)P_*(T=-1|X=x)=0.
\]
\[
\EE_{P_*}[Y|X=x,T=t]=\sum_{y\in\{-1,1\}}y\frac{1+bty}{2}=bt.
\]
Therefore, under $P_*$, the parameters in the partial linear model is $(\beta,\pi,\mu)_{P_*}=(b,0,0)$.

Similarly, under $P_j$, we have that
\[
P_j(T=t,Y=y|X=x)=\frac{1+\alpha t\phi_j(x)}{2}\frac{1+ry\phi_j(x)}{2},
\]
which can be factorized. Thus, $T$ and $Y$ are conditionally independent given $X$. We have
\[
\EE_{P_j}[T|X=x]=\sum_{t\in\{-1,1\}}t\frac{1+\alpha t\phi_j(x)}{2}=\alpha\phi_j(x),
\]
\[
\EE_{P_j}[Y|X=x,T=t]=\sum_{y\in\{-1,1\}}y \frac{1+ry\phi_j(x)}{2}=r\phi_j(x).
\]
Hence, we know that $(\beta,\pi,\mu)_{P_j}=(0,\alpha\phi_j,r\phi_j)$.

Under $P_*$, we have $\varepsilon_T=T$, $\varepsilon_Y=Y-bT$. Thus, we have that $|\varepsilon_T|\vee|\varepsilon_Y|\le \frac{9}{8}$. $\EE_{P_*}[\varepsilon_T^2|X]=1$, $\EE_{P_*}[\varepsilon_T|X]=0$, $\EE_{P_*}[\varepsilon_Y|X=x,T=t]=bt-bt=0$. Thus, the data generation distribution $P_*$ satisfies all the assumptions in Theorem \ref{thm:lower_bound}, i.e., $P_*\in\cP_{\text{PLM}}$.

Similarly, under $P_j$, $\varepsilon_T=T-\alpha\phi_j(X)$, $\varepsilon_Y=Y-r\phi_j(X)$. We again have that $|\varepsilon_T|\vee|\varepsilon_Y|\le \frac{9}{8}$. $\EE_{P_j}[\varepsilon_Y|X,T]=r\phi_j(X)-r\phi_j(X)=0$, $\EE_{P_j}[\varepsilon_T|X]=0$, $\EE_{P_j}[\varepsilon_T^2|X]=1+\alpha^2-2\alpha^2=1-\alpha^2> 1/3$.
Therefore, we obtain that $P_j\in\cP_{\text{PLM}}$ as well.

Our next lemma shows that these data generation distributions are very hard to distinguish. 
\begin{lemma}\label{lem:chi2_dist}
    Set $D=\frac{\alpha^2+r^2-2b^2}{1-b^2}$. For any $N\in\ZZ^+$, we denote $\overline{P}_N$ as the uniform mixture of the \(M\) \(N\)-sample distributions \(P_j^{\otimes N}\), i.e., $\overline{P}_N=\frac{1}{M}\sum_{j=1}^MP_j^{\otimes N}$. Then we have that
    \[
    \chi^2(\overline{P}_N\|P_*^{\otimes N})=\frac{(1+D)^N-1}{M}.
    \]
    Hence, we have
    \[
    \texttt{D}_\text{TV}(\overline{P}_N\|P_*^{\otimes N})\le \frac{1}{2}\sqrt{\frac{(1+D)^N-1}{M}}.
    \]
\end{lemma}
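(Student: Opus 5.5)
The plan is the standard second-moment (Ingster) computation for a mixture against a product null, followed by Cauchy--Schwarz to pass to total variation. Write \(L_j:=dP_j/dP_*\). Since \(P_*^{\otimes N}\) is a product measure and each \(P_j^{\otimes N}\) is a product measure, the likelihood ratio of the mixture is \(\frac1M\sum_j\prod_{i=1}^N L_j(Z_i)\). Expanding the square and using independence across the \(N\) coordinates gives
\[
1+\chi^2(\overline P_N\|P_*^{\otimes N})
=\frac{1}{M^2}\sum_{j,k=1}^M\bigl(\EE_{P_*}[L_j L_k]\bigr)^N .
\]
It therefore suffices to show that \(\EE_{P_*}[L_jL_k]=1\) for \(j\neq k\) and \(\EE_{P_*}[L_j^2]=1+D\). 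Substituting these values, the right-hand side is \(M^{-2}\{M(1+D)^N+M(M-1)\}\), which equals \(1+\{(1+D)^N-1\}/M\) as claimed.

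To compute \(\EE_{P_*}[L_jL_k]\), I will rewrite everything under \(Q\). We have
\[
\EE_{P_*}[L_jL_k]=\EE_Q\!\left[\frac{f_jf_k}{1+bTY}\right],
\qquad f_j:=\frac{dP_j}{dQ}.
\]
Because \(\phi_j^2=1\), the density expands as
\[
f_j=1+\alpha t\phi_j+ry\phi_j+b\,ty .
\]
Because \(ty\in\{\pm1\}\), the denominator can be inverted exactly:
\[
\frac{1}{1+bty}=\frac{1-bty}{1-b^2}.
\]
I then take the expectation over \(X\) first, using that the \(\phi_j\) are independent Rademacher variables under \(\nu_d\).

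\emph{Case \(j\neq k\).} Every term containing an unpaired \(\phi_j\) or \(\phi_k\) vanishes. This leaves \(\EE_X[f_jf_k]=1+b^2+2b\,ty\).

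\emph{Case \(j=k\).} The squared terms survive, as does the cross term \(2\alpha r\,ty\phi_j^2=2b\,ty\). This gives \(\EE_X[f_j^2]=1+\alpha^2+r^2+b^2+4b\,ty\).

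In both cases I multiply by \(1-bty\) and average over the independent Rademacher pair \((T,Y)\), using \(\EE[TY]=0\) and \((TY)^2=1\). This yields \((1+b^2)-2b^2=1-b^2\) in the first case and \(1+\alpha^2+r^2-3b^2\) in the second. Dividing by \(1-b^2\) gives exactly \(1\) and \(1+D\), respectively.

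For the total-variation statement, I will apply Cauchy--Schwarz to the likelihood ratio:
\[
2\,\texttt{D}_\text{TV}(\overline P_N\|P_*^{\otimes N})
=\EE_{P_*^{\otimes N}}\bigl|d\overline P_N/dP_*^{\otimes N}-1\bigr|
\le\sqrt{\chi^2(\overline P_N\|P_*^{\otimes N})},
\]
which gives the stated bound.

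The only delicate step is the bookkeeping in the expansion of \(f_jf_k(1-bty)\). One must use \(\alpha r=b\) to merge the \(\alpha r\,ty\) cross term with the \(b\,ty\) term, and must check that no stray \(\phi\)-terms survive the \(X\)-average. The remaining steps are routine. Finally, \(b\le 1/64\) guarantees \(1-b^2>0\), so the inversion of \(1+bty\) and the definition of \(D\) are legitimate.
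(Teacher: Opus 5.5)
Your proposal is correct and follows essentially the same route as the paper: the second-moment identity for the mixture reduces everything to the one-observation cross moments, you invert $1/(1+bTY)=(1-bTY)/(1-b^2)$ exactly, orthonormality of the $\phi_j$ removes the off-diagonal terms, and Cauchy--Schwarz gives the total-variation bound. The only cosmetic difference is bookkeeping: the paper writes $dP_j/dQ = L_* + \phi_j(X)(\alpha T + rY)$, which makes the off-diagonal moment equal to one at once, whereas you reach the same values $1$ and $1+D$ by expanding term by term.
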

With Lemma \ref{lem:chi2_dist} and the sample number $N$, we choose $M=\ceil{4e^{2ND}}$ to obtain
\[
 \texttt{D}_\text{TV}(\overline{P}_N\|P_*^{\otimes N})\le \frac{1}{4}.
\]
For any estimator $\hat{\beta}$, define the event $\cA=\cbr{\hat{\beta}\ge \frac{b}{2}}$. On $\cA^c$, we have $(\hat{\beta}-b)^2\ge \frac{b^2}{4}$. Thus, by Markov inequality, we have
\begin{align*}
\EE_{P_*^{\otimes N}}[(\hat{\beta}-b)^2]+\EE_{\overline{P}_N}[\hat{\beta}^2]\ge& \frac{b^2}{4}(P_*^{\otimes N}(\cA^c)+\overline{P}_N(\cA))=\frac{b^2}{4}\cbr{1-P_*^{\otimes} N(\cA)+\overline{P}_N(\cA)}\\
\ge& \frac{b^2}{4}(1-\texttt{D}_\text{TV}(\overline{P}_N\|P_*^{\otimes N}))\ge \frac{3b^2}{16}.
\end{align*}
Thus, we have that
\[
\max\cbr{\EE_{P_*^{\otimes N}}[(\hat{\beta}-b)^2],\max_{1\le j\le M}\EE_{P_j^{\otimes N}}[\hat{\beta}^2]}\ge\frac{3b^2}{32}.
\]
This is the lower bound test engine that we use. Recall that we have $N=2n$ samples.

Our next step is to construct the function classes $\cF_{\pi}$ $\cF_{\mu}$ in the lower minimax bound argument concretely so that they lie in $\cH_n(a_q,s_q)$. To transform our test engine to the minimax risk bound, we first describe the following process.

\paragraph{From Test to Minimax Risk.} Assuming that for some constructed $\cF_{\mu}$, $\cF_{\pi}$, we have proved that $\cF_{\pi}\in\cH_n(a_\pi,s_\pi)$, $\cF_\mu\in\cH_{n}(a\mu,s_\mu)$ and that
\[
\cbr{P_*,P_1,\cdots,P_M}\subset \cP(\cF_\mu,\cF_\pi).
\]
Then, by the definition of minimax risk, with $N$ observed samples, we have that
\begin{align*}
     \cE(n,\{a_q,s_q\}_{q\in\{\mu,\pi\}})=&\sup_{\tilde{\cF}_{q}\in\cH_n(a_q,s_q),q\in\{\mu,\pi\}}\inf_{\hat{\beta}}\sup_{\PP\in\cP(\tilde{\cF}_{\mu},\tilde{\cF}_\pi)}\EE_{\PP^{\otimes N}}[|\hat{\beta}-\beta(\PP)|^2]\\
    \ge&\inf_{\hat{\beta}}\sup_{\PP\in\cP(\cF_{\mu},\cF_\pi)}\EE_{\PP^{\otimes N}}[|\hat{\beta}-\beta(\PP)|^2]\\
    \ge& \inf_{\hat{\beta}}\max\cbr{\EE_{P_*^{\otimes N}}[(\hat{\beta}-b)^2],\max_{1\le j\le M}\EE_{P_j^{\otimes N}}[\hat{\beta}^2]}\\
    \ge&\frac{3b^2}{32}
\end{align*}
Define $C_*=16+\log 6$ and $\kappa$ small enough such that $0<\kappa\le \min\cbr{\frac{1}{24},(4\sqrt{C_*})^{-1/2}}$.

Note that, roughly speaking, we claim
\[
\sqrt{ \cE(n,\{a_q,s_q\}_{q\in\{\mu,\pi\}})}\gtrsim  \frac{1}{n^{1/2}}+a_\mu a_\pi+\min\{a_{\pi}s_\mu+s_\pi^2,a_\mu s_\pi+s_\mu^2\}.
\]
Thus, we consider the following four cases.
\paragraph{Case I.} In this case, we define $\alpha:=\kappa a_{\pi}$, $r:=\kappa a_\mu$, $b=\alpha r=\kappa^2(a_\mu a_\pi)$. Recall that we denote $D=\frac{\alpha^2+r^2-2b^2}{1-b^2}$ and $M=\ceil{4e^{2ND}}$.
We choose $\cF_{\pi}=\cbr{0,\alpha\phi_1,\cdots,\alpha\phi_M}$, $\cG_{\pi}=\cbr{0}$, $\cF_{\mu}=\cbr{0,r\phi_1,\cdots,r\phi_M}$, $G_{\mu}=\cbr{0}$.

 By the parameter function settings of $P_*,P_1,\cdots,P_M$, we have that $\cbr{P_*,P_1,\cdots,P_M}\subset \cP(\cF_{\mu},\cF_\pi)$. Then, by direct algebra, we have that
\begin{align*}
    \sup_{f\in\cF_{\pi}}\inf_{g\in\cG_\pi}\|f-g\|_2=\sup_{f\in\cF_\pi}\|f\|_2=\alpha=\kappa a_{\pi}\le a_{\pi}.
\end{align*}
Similarly, we have $\sup_{f\in\cF_{\mu}}\inf_{g\in\cG_\mu}\|f-g\|_2=ra_\mu\le a_\mu$.
Since $\cG_{\pi}$ and $\cG_{\mu}$ are singletons, we have $\cR_n(\delta,\partial \cG_{\pi})=\cR_n(\delta,\partial \cG_{\mu})=0$. Thus, we have
\[
\cF_{\pi}\in\cH_n(a_\pi,s_{\pi}),\ \cF_\mu\in\cH_n(a_\mu,s_\mu).
\]
Plugging this into the test engine machine, we have that $$ \cE(n,\{a_q,s_q\}_{q\in\{\mu,\pi\}})\ge \frac{3\kappa^4 (a_\mu a_\pi)^2}{32}.$$

\paragraph{Case II.} We define $\alpha:=\kappa (a_\pi\wedge s_\mu)$, $r=\kappa s_\mu$, and $b=\alpha r=\kappa^2s_\mu(a_\pi\wedge s_\mu)$. We will use this construction in the risk lower bound under the condition that $b=\kappa^2s_\mu(a_\pi\wedge s_\mu)\ge \frac{1}{\sqrt{n}}$. Recall that we denote $D=\frac{\alpha^2+r^2-2b^2}{1-b^2}$ and $M=\ceil{4e^{2ND}}$.
We choose $\cF_{\pi}=\cbr{0,\alpha\phi_1,\cdots,\alpha\phi_M}$, $\cG_{\pi}=\cbr{0}$, $\cF_{\mu}=\cbr{0,r\phi_1,\cdots,r\phi_M}$, $G_{\mu}=\cbr{0,r\phi_1,\cdots,r\phi_M}$.

Again, we have that $\cbr{P_*,P_1,\cdots,P_M}\subset \cP(\cF_{\mu},\cF_\pi)$. Repeating the process above, we have
\begin{align*}
    \sup_{f\in\cF_{\pi}}\inf_{g\in\cG_\pi}\|f-g\|_2=\sup_{f\in\cF_\pi}\|f\|_2=\alpha=\kappa(a_\pi\wedge s_\mu)\le a_\pi. 
\end{align*}
$\cR_n(\delta,\partial \cG_\pi)=0$ Thus, we obtain that $\cF_\pi\in\cH_n(a_\pi,s_\pi)$. Secondly, $\sup_{f\in\cF_{\pi}}\inf_{g\in\cG_\pi}\|f-g\|_2=0$. Now it remains to bound $\cR_n(\delta,\partial\cG_\mu)$.

First, by definition, we have that $D\le2(\alpha^2+r^2)\le 4\kappa^2s_\mu^2$ and that $b=\kappa^2s_\mu(a_\pi\wedge s_\mu)\le \kappa^2s_\mu^2$. Now, recall the condition that $b\ge\frac{1}{\sqrt{n}}$, we obtain $\kappa^2ns_\mu^2\ge\sqrt{n}\ge 1$ by combining these inequalities together. Therefore, recall the value of $C_*$, we have that
\begin{align*}
    \log(M+1)\le \log 6+2ND\le \log 6+4nD\le \log 6+16\kappa^2ns_{\mu}^2\le (16+\log 6)\kappa^2ns_\mu^2=C_*\kappa^2ns_\mu^2.
\end{align*}
Finally, by Lemma \ref{lem:Rad_com_G(t,M)}, for every $\delta\ge s_\mu$, we have that
\begin{align*}
    \cR_n(\delta,\partial\cG_\mu)\le 4r\sqrt{\frac{\log (M+1)}{n}}\le 4\kappa s_\mu\sqrt{C_*\kappa^2s_\mu^2}=4\sqrt{C_*}\kappa^2s_\mu^2\le s_\mu^2\le \delta s_\mu.
\end{align*}
Thus, we have that $\cF_\mu\in \cH_n(a_\mu,s_\mu)$. We can apply the test engine now to obtain that when $\kappa^2s_\mu(a_\pi\wedge s_\mu)\ge\frac{1}{\sqrt{n}}$,
\[
\cE(n,\{a_q,s_q\}_{q\in\{\mu,\pi\}})\ge \frac{3\kappa^4s_\mu^2(a_\pi\wedge s_\mu)^2}{32}.
\]
\paragraph{Case III.} This Case is symmetric to Case II above.  We define $\alpha=\kappa s_\pi$, $r=\kappa(a_\mu\wedge s_\pi)$, $b=\alpha r=\kappa^2s_\pi(a_\mu\wedge s_\pi)$. We choose $\cF_\pi=\cbr{0,\alpha\phi_1,\cdots,\alpha\phi_M}=\cG_\pi$, $\cF_{\mu}=\cbr{0,r\phi_1,\cdots,r\phi_M}$, and $\cG_\mu=\cbr{0}$.

Then, by the same argument above, we conclude that when $\kappa^2s_\pi(a_\mu\wedge s_\pi)\ge 1/\sqrt{n}$, $\cF_\pi\in\cH_n(a_\pi,s_\pi)$, $\cF_\mu\in\cH_n(a_\mu,s_\mu)$. Therefore, when $\kappa^2s_\pi(a_\mu\wedge s_\pi)\ge 1/\sqrt{n}$, 
\[
\cE(n,\{a_q,s_q\}_{q\in\{\mu,\pi\}})\ge \frac{3\kappa^4s_\pi^2(a_\mu\wedge s_\pi)^2}{32}.
\]
\paragraph{Case IV.} In this case, we choose $\alpha=\kappa(s_\mu\wedge s_\pi)$, $r=\kappa (s_\mu\wedge s_\pi)$, $b=\alpha r=\kappa^2(s_\mu\wedge s_\pi)^2$. Recall that $D=\frac{\alpha^2+r^2-2b^2}{1-b^2}$ and $M=\ceil{4e^{2ND}}$. In this case, we choose $\cF_\pi=\cG_\pi=\cbr{0,\alpha\phi_1,\cdots,\alpha\phi_M}$, $\cF_\mu=\cG_\mu=\cbr{0,r\phi_1,\cdots,r\phi_M}$. 

Obviously, we have that $\sup_{f\in\cF_\pi}\inf_{g\in\cG_\pi}\|f-g\|_2=0$, $\sup_{f\in\cF_\mu}\inf_{g\in\cG_\mu}\|f-g\|_2=0$. We now claim that when $\kappa^2(s_\mu\wedge s_\pi)^2\ge\frac{1}{\sqrt{n}}$, $$\cR_n(\delta_1;\partial\cG_\pi)\le \delta_1 s_\pi, \cR_n(\delta_2;\partial\cG_\mu)\le \delta_2 s_\mu,\ \text{for}\ \delta_1\ge s_\pi,\delta_2\ge s_\mu.$$ 

To prove the claim, notice that $D\le 2(\alpha^2+r^2)\le 4\kappa^2(s_\mu\wedge s_\pi)^2$. By the condition $\kappa^2(s_\mu\wedge s_\pi)^2\ge\frac{1}{\sqrt{n}}$, we obtain that $\kappa^2n(s_\mu\wedge s_\pi)^2\ge\sqrt{n}\ge 1$. Thus,
\[
\log(M+1)=\log 6+2ND= \log 6+4nD\le \log 6+16\kappa^2n(s_{\mu}\wedge s_\pi)^2\le (16+\log 6)\kappa^2n(s_{\mu}\wedge s_\pi)^2.
\]
Thus, recall that we set $C_*=16+\log 6$, for $\delta\ge s_\pi$, we have that
\begin{align*}
    \cR_n(\delta;\partial\cG_\pi)\le 4\alpha\sqrt{\frac{\log(M+1)}{n}}\le 4\sqrt{C_*}\kappa^2(s_\mu\wedge s_\pi)^2\le (s_{\mu}\wedge s_\pi)^2\le s_\pi^2\le\delta s_\pi.
\end{align*}
Similarly, for $\delta\ge s_\mu$, we have
\[
\cR_n(\delta;\partial\cG_\mu)\le \delta s_\mu.
\]
Thus, we prove the claim and therefore obtain
\[
\cF_\pi\in\cH_n(a_\pi,s_\pi),\ \cF_\mu\in\cH_n(a_\mu,s_\mu).
\]
Applying the test engine, we have that 
\[
\cE(n,\{a_q,s_q\}_{q\in\{\mu,\pi\}})\ge \frac{3\kappa^4(s_\mu\wedge s_\pi)^4}{32}.
\]
Finally, by \citet{balakrishnan2026fundamental}, we know that $\cE(n,\{a_q,s_q\}_{q\in\{\mu,\pi\}})\ge \frac{C}{n}$ for some universal constant $C>0$. Combining these circumstances together, we obtain that
\[
\cE(n,\{a_q,s_q\}_{q\in\{\mu,\pi\}})\ge c\max\cbr{\frac{1}{n},(a_\mu a_\pi)^2,s_\mu^2(a_\pi\wedge s_\mu)^2,s_\pi^2(a_\mu\wedge s_\pi)^2,(s_\mu\wedge s_\pi)^4}.
\]
We have the following lemma.
\begin{lemma}[Two-sided truncation inequality]
\label{lem:two-sided-truncation}
Let \(p,u,v,\rho,\tau\ge 0\) satisfy $uv\le p\sqrt{\rho\tau}$.
Define
\[
x:=u\wedge\rho,
\ 
y:=v\wedge\tau,
\ 
z:=\rho\wedge\tau.
\]
Then, we have
\[
\min\{u+\tau,v+\rho\}
\le p+x+y+z.
\]
Consequently,
\[
p+\min\{u+\tau,v+\rho\}
\le 2p+x+y+z
\le 5\max\{p,x,y,z\}.
\]
\end{lemma}
In our lower bound proof, we take $p=a_\mu a_\pi$, $u=a_\pi s_\mu$, $v=a_\mu s_\pi$, $\rho=s_\mu^2$, $\tau=s_\pi^2$. Then, we have that
\[
B=p+\min\cbr{u+\tau,v+\rho}\le 5\max\cbr{p,x,y,z}.
\]
Thus, we have that
\[
\frac{1}{n}+B^2\le 26\max\cbr{\frac{1}{n},(a_\mu a_\pi)^2,s_\mu^2(a_\pi\wedge s_\mu)^2,s_\pi^2(a_\mu\wedge s_\pi)^2,(s_\mu\wedge s_\pi)^4}.
\]
Finally, plugging this back, we get
\[
\cE(n,\{a_q,s_q\}_{q\in\{\mu,\pi\}})\ge \frac{c}{26}\sbr{1\wedge\cbr{\frac{1}{n}+\rbr{a_{\mu}a_\pi+\min\{a_{\pi}s_\mu+s_\pi^2,a_\mu s_\pi+s_\mu^2\}}^2}}.
\]
By the AM-GM inequality $\sqrt{a^2+b^2}\ge\frac{a+b}{\sqrt{2}}$, we have 
\[
\sqrt{\cE(n,\{a_q,s_q\}_{q\in\{\mu,\pi\}})}\gtrsim 1\wedge\cbr{\frac{1}{\sqrt{n}}+\rbr{a_{\mu}a_\pi+\min\{a_{\pi}s_\mu+s_\pi^2,a_\mu s_\pi+s_\mu^2\}}}.
\]
Thus, we finish the proof.
\end{proof}

\begin{proof}[Proof of Corollary \ref{cor:final_characterization}]
We first compare the deterministic remainder appearing in the upper
bound with the quantity appearing in the lower bound.

Recall that $\Gamma=
\min\left\{
a_\mu(a_\pi+s_\pi)+(a_\mu+s_\mu)^2,\,
a_\pi(a_\mu+s_\mu)+(a_\pi+s_\pi)^2
\right\}$, and
\[
B
=
a_\mu a_\pi+
\min\left\{
a_\pi s_\mu+s_\pi^2,\,
a_\mu s_\pi+s_\mu^2
\right\}.
\]
We first prove
\[
\Gamma
\le
4\left(
a_\mu a_\pi+a_\pi s_\mu+s_\pi^2
\right).
\]
There are two cases.

\noindent
\emph{Case 1:}
If $
a_\mu+s_\mu\le a_\pi+s_\pi$, we first show that
\[
s_\pi(a_\mu+s_\mu)
\le
a_\pi(a_\mu+s_\mu)+s_\pi^2.
\]

If \(s_\pi\le a_\pi\), then we have
$s_\pi(a_\mu+s_\mu)\le
a_\pi(a_\mu+s_\mu)
\le
a_\pi(a_\mu+s_\mu)+s_\pi^2$.

If \(s_\pi>a_\pi\), then we have
$(s_\pi-a_\pi)(a_\mu+s_\mu)
\le
(s_\pi-a_\pi)(a_\pi+s_\pi)$.

Recall that $a_\mu+s_\mu\le a_\pi+s_\pi$. Moreover, by algebra, we have $(s_\pi-a_\pi)(a_\pi+s_\pi)
=
s_\pi^2-a_\pi^2
\le
s_\pi^2$.

Therefore, we have $(s_\pi-a_\pi)(a_\mu+s_\mu)\le s_\pi^2$,
which is equivalent to
\[
s_\pi(a_\mu+s_\mu)
\le
a_\pi(a_\mu+s_\mu)+s_\pi^2.
\]

Thus, in either case, we have
\[
s_\pi(a_\mu+s_\mu)
\le
a_\pi(a_\mu+s_\mu)+s_\pi^2.
\]
Consequently, we have
\begin{align*}
(a_\mu+s_\mu)(a_\pi+s_\pi)=
a_\pi(a_\mu+s_\mu)
+
s_\pi(a_\mu+s_\mu)
\le
2\left\{
a_\pi(a_\mu+s_\mu)+s_\pi^2
\right\}
=
2\left\{
a_\mu a_\pi+a_\pi s_\mu+s_\pi^2
\right\}.
\end{align*}

Since $a_\mu\le a_\mu+s_\mu$
and $a_\mu+s_\mu\le a_\pi+s_\pi$, we have
\[
a_\mu(a_\pi+s_\pi)
\le
(a_\mu+s_\mu)(a_\pi+s_\pi)
\]
and
\[
(a_\mu+s_\mu)^2
\le
(a_\mu+s_\mu)(a_\pi+s_\pi).
\]
It follows that
\begin{align*}
a_\mu(a_\pi+s_\pi)+(a_\mu+s_\mu)^2\le
2(a_\mu+s_\mu)(a_\pi+s_\pi)
\le
4\left\{
a_\mu a_\pi+a_\pi s_\mu+s_\pi^2
\right\}.
\end{align*}
Because \(\Gamma\) is the minimum of the two upper-bounds, we have that $\Gamma
\le
a_\mu(a_\pi+s_\pi)+(a_\mu+s_\mu)^2$, and hence
\[
\Gamma
\le
4\left\{
a_\mu a_\pi+a_\pi s_\mu+s_\pi^2
\right\}.
\]

\medskip

\noindent
\emph{Case 2:} If
$a_\mu+s_\mu>a_\pi+s_\pi$, because $a_\mu+s_\mu>a_\pi+s_\pi\ge a_\pi$, we have
\[
a_\pi(a_\mu+s_\mu)\ge a_\pi^2.
\]
Therefore, we obtain
\begin{align*}
a_\mu a_\pi+a_\pi s_\mu+s_\pi^2=
a_\pi(a_\mu+s_\mu)+s_\pi^2\ge
a_\pi^2+s_\pi^2.
\end{align*}
Using the fact that $(a_\pi+s_\pi)^2
\le
2a_\pi^2+2s_\pi^2$, we obtain
\[
(a_\pi+s_\pi)^2
\le
2\left\{
a_\mu a_\pi+a_\pi s_\mu+s_\pi^2
\right\}.
\]
Also,
\[
a_\pi(a_\mu+s_\mu)
\le
a_\mu a_\pi+a_\pi s_\mu+s_\pi^2.
\]
Consequently, we have
\begin{align*}
a_\pi(a_\mu+s_\mu)+(a_\pi+s_\pi)^2
\le
3\left\{
a_\mu a_\pi+a_\pi s_\mu+s_\pi^2
\right\}
\le
4\left\{
a_\mu a_\pi+a_\pi s_\mu+s_\pi^2
\right\}.
\end{align*}
Since $\Gamma$ is the minimum of the two upper bounds, we have $\Gamma
\le
a_\pi(a_\mu+s_\mu)+(a_\pi+s_\pi)^2$, thus, we again conclude that
\[
\Gamma
\le
4\left\{
a_\mu a_\pi+a_\pi s_\mu+s_\pi^2
\right\}.
\]
Therefore, in both cases, we obtain
\[
\Gamma
\le
4\left\{
a_\mu a_\pi+a_\pi s_\mu+s_\pi^2
\right\}.
\]

Following the same logic, our next goal is to prove
\[
\Gamma
\le
4\left\{
a_\mu a_\pi+a_\mu s_\pi+s_\mu^2
\right\}.
\]

Again, there are two cases.

\emph{Case 1:} If $a_\pi+s_\pi\le a_\mu+s_\mu$, we first show that
\[
s_\mu(a_\pi+s_\pi)
\le
a_\mu(a_\pi+s_\pi)+s_\mu^2.
\]

If \(s_\mu\le a_\mu\), then $s_\mu(a_\pi+s_\pi)
\le
a_\mu(a_\pi+s_\pi)
\le
a_\mu(a_\pi+s_\pi)+s_\mu^2$.

If \(s_\mu>a_\mu\), then we have
$(s_\mu-a_\mu)(a_\pi+s_\pi)
\le
(s_\mu-a_\mu)(a_\mu+s_\mu)
=
s_\mu^2-a_\mu^2
\le
s_\mu^2$.

Therefore, either way, we always have
\[
s_\mu(a_\pi+s_\pi)
\le
a_\mu(a_\pi+s_\pi)+s_\mu^2.
\]
It follows that
\begin{align*}
(a_\mu+s_\mu)(a_\pi+s_\pi)=
a_\mu(a_\pi+s_\pi)
+
s_\mu(a_\pi+s_\pi)
\le
2\left\{
a_\mu(a_\pi+s_\pi)+s_\mu^2
\right\}
=
2\left\{
a_\mu a_\pi+a_\mu s_\pi+s_\mu^2
\right\}.
\end{align*}

Since  $
a_\pi\le a_\pi+s_\pi$ and $a_\pi+s_\pi\le a_\mu+s_\mu$, we have
\[
a_\pi(a_\mu+s_\mu)
\le
(a_\pi+s_\pi)(a_\mu+s_\mu)
\]
and
\[
(a_\pi+s_\pi)^2
\le
(a_\pi+s_\pi)(a_\mu+s_\mu).
\]
Therefore, adding the inequalities above together, we have
\begin{align*}
a_\pi(a_\mu+s_\mu)+(a_\pi+s_\pi)^2\le
2(a_\mu+s_\mu)(a_\pi+s_\pi)\le
4\left\{
a_\mu a_\pi+a_\mu s_\pi+s_\mu^2
\right\}.
\end{align*}
Since $\Gamma$ is the minimum of the two upper bounds, we have $\Gamma
\le
a_\pi(a_\mu+s_\mu)+(a_\pi+s_\pi)^2$. Therefore,
we obtain
\[
\Gamma
\le
4\left\{
a_\mu a_\pi+a_\mu s_\pi+s_\mu^2
\right\}.
\]

\medskip

\noindent
\emph{Case 2:} If $a_\pi+s_\pi>a_\mu+s_\mu$, because $a_\pi+s_\pi>a_\mu+s_\mu\ge a_\mu$, we have
\[
a_\mu(a_\pi+s_\pi)\ge a_\mu^2.
\]
It follows that
\begin{align*}
a_\mu a_\pi+a_\mu s_\pi+s_\mu^2=a_\mu(a_\pi+s_\pi)+s_\mu^2\ge
a_\mu^2+s_\mu^2.
\end{align*}
Hence
\[
(a_\mu+s_\mu)^2
\le
2a_\mu^2+2s_\mu^2
\le
2\left\{
a_\mu a_\pi+a_\mu s_\pi+s_\mu^2
\right\}.
\]
Moreover, by direct algebra,  we have
\[
a_\mu(a_\pi+s_\pi)
\le
a_\mu a_\pi+a_\mu s_\pi+s_\mu^2.
\]
Therefore, we get
\begin{align*}
a_\mu(a_\pi+s_\pi)+(a_\mu+s_\mu)^2
\le
3\left\{
a_\mu a_\pi+a_\mu s_\pi+s_\mu^2
\right\}
\le
4\left\{
a_\mu a_\pi+a_\mu s_\pi+s_\mu^2
\right\}.
\end{align*}
Since $\Gamma
\le
a_\mu(a_\pi+s_\pi)+(a_\mu+s_\mu)^2$ by definition,
we again obtain
\[
\Gamma
\le
4\left\{
a_\mu a_\pi+a_\mu s_\pi+s_\mu^2
\right\}.
\]

Therefore, combining the two bounds gives
\begin{align*}
\Gamma
\le
4\min\left\{
a_\mu a_\pi+a_\pi s_\mu+s_\pi^2,\,
a_\mu a_\pi+a_\mu s_\pi+s_\mu^2
\right\}
=
4\left[
a_\mu a_\pi+
\min\left\{
a_\pi s_\mu+s_\pi^2,\,
a_\mu s_\pi+s_\mu^2
\right\}
\right]=
4B.
\end{align*}
Applying the inequality proved in Step 1 with
\(s_\mu,s_\pi\) replaced by \(\bar s_\mu,\bar s_\pi\), we have
\begin{align}
\bar\Gamma
\le
4\Big[
a_\mu a_\pi+
\min\big\{
a_\pi\bar s_\mu+\bar s_\pi^2,\,
a_\mu\bar s_\pi+\bar s_\mu^2
\big\}
\Big].
\label{eq:bargamma-first-bound}
\end{align}

We now prove that
\begin{align}
&a_\mu a_\pi+
\min\left\{
a_\pi\bar s_\mu+\bar s_\pi^2,\,
a_\mu\bar s_\pi+\bar s_\mu^2
\right\}
\le
2\left(B+\rho_n^2\right).
\label{eq:floor-inflation}
\end{align}

There are four cases.

\medskip

\noindent
\emph{Case 1:} If \(s_\mu\ge\rho_n\) and \(s_\pi\ge\rho_n\), then, $\bar s_\mu=s_\mu,
\ 
\bar s_\pi=s_\pi$.

Therefore, the left-hand side of
\eqref{eq:floor-inflation} is exactly \(B\), and hence
\[
B\le2(B+\rho_n^2).
\]

\medskip

\noindent
\emph{Case 2:} If \(s_\mu<\rho_n\) and \(s_\pi<\rho_n\), then $\bar s_\mu=\bar s_\pi=\rho_n$, and therefore we have
\begin{align*}
&a_\mu a_\pi+
\min\left\{
a_\pi\bar s_\mu+\bar s_\pi^2,\,
a_\mu\bar s_\pi+\bar s_\mu^2
\right\}
=
a_\mu a_\pi+\rho_n^2
+\rho_n(a_\mu\wedge a_\pi).
\end{align*}
Since $(a_\mu\wedge a_\pi)^2\le a_\mu a_\pi$, we have
\begin{align*}
2\rho_n(a_\mu\wedge a_\pi)\le
\rho_n^2+(a_\mu\wedge a_\pi)^2
\le
\rho_n^2+a_\mu a_\pi.
\end{align*}
The first inequality is by AM-GM inequality. Equivalently, we have $\rho_n(a_\mu\wedge a_\pi)
\le
\frac12\left(\rho_n^2+a_\mu a_\pi\right)$.

Consequently, we have
\begin{align*}
a_\mu a_\pi+\rho_n^2+\rho_n(a_\mu\wedge a_\pi)
\le
\frac32\left(a_\mu a_\pi+\rho_n^2\right)
\le
2\left(B+\rho_n^2\right),
\end{align*}
where the last inequality uses \(B\ge a_\mu a_\pi\).

\medskip

\noindent
\emph{Case 3:} If \(s_\mu<\rho_n\le s_\pi\), then we know that $\bar s_\mu=\rho_n,
\ 
\bar s_\pi=s_\pi$.
Hence, we have
\begin{align*}
&a_\mu a_\pi+
\min\left\{
a_\pi\bar s_\mu+\bar s_\pi^2,\,
a_\mu\bar s_\pi+\bar s_\mu^2
\right\}
=
a_\mu a_\pi+
\min\left\{
a_\pi\rho_n+s_\pi^2,\,
a_\mu s_\pi+\rho_n^2
\right\}.
\end{align*}
Recall that
\[
(B+\rho_n^2)=(
a_\mu a_\pi+
\min\left\{
a_\pi s_\mu+s_\pi^2,\,
a_\mu s_\pi+s_\mu^2
\right\}+\rho_n^2)
\]
Suppose first that $a_\mu s_\pi+s_\mu^2
\le
a_\pi s_\mu+s_\pi^2$. Then we know that $B=a_\mu a_\pi+a_\mu s_\pi+s_\mu^2$.

Therefore, we have
\begin{align*}
a_\mu a_\pi+
\min\left\{
a_\pi\rho_n+s_\pi^2,\,
a_\mu s_\pi+\rho_n^2
\right\}
\le
a_\mu a_\pi+a_\mu s_\pi+\rho_n^2=
B+\rho_n^2-s_\mu^2
\le
B+\rho_n^2.
\end{align*}

Suppose next that $a_\pi s_\mu+s_\pi^2
<
a_\mu s_\pi+s_\mu^2$. Then we know that $B=a_\mu a_\pi+a_\pi s_\mu+s_\pi^2$.

Consequently, we have
\begin{align*}
a_\mu a_\pi+
\min\left\{
a_\pi\rho_n+s_\pi^2,\,
a_\mu s_\pi+\rho_n^2
\right\}
\le
a_\mu a_\pi+a_\pi\rho_n+s_\pi^2
=
B+a_\pi(\rho_n-s_\mu).
\end{align*}

i) If \(a_\pi\le s_\mu\), then we have $a_\pi(\rho_n-s_\mu)
\le
s_\mu(\rho_n-s_\mu)
\le
\rho_n^2$.

ii) If \(a_\pi>s_\mu\), then we have that $a_\pi s_\mu+s_\pi^2
<
a_\mu s_\pi+s_\mu^2$, which implies that
\begin{align*}
a_\mu s_\pi
>
s_\pi^2+s_\mu(a_\pi-s_\mu)
>
s_\pi^2.
\end{align*}
Therefore, we have $a_\mu>s_\pi\ge\rho_n$, and hence
\[
a_\pi(\rho_n-s_\mu)
\le
a_\pi\rho_n
\le
a_\mu a_\pi.
\]
Thus, in either subcase, we have
\[
a_\pi(\rho_n-s_\mu)
\le
a_\mu a_\pi+\rho_n^2.
\]
It follows that
\begin{align*}
&a_\mu a_\pi+
\min\left\{
a_\pi\rho_n+s_\pi^2,\,
a_\mu s_\pi+\rho_n^2
\right\}
\le
B+a_\mu a_\pi+\rho_n^2
\le
2B+\rho_n^2\le
2(B+\rho_n^2).
\end{align*}

\medskip

\noindent
\emph{Case 4:} \(s_\pi<\rho_n\le s_\mu\). In this case, we have $\bar s_\pi=\rho_n,
\ 
\bar s_\mu=s_\mu$, and hence
\begin{align*}
a_\mu a_\pi+
\min\left\{
a_\pi\bar s_\mu+\bar s_\pi^2,\,
a_\mu\bar s_\pi+\bar s_\mu^2
\right\}
=
a_\mu a_\pi+
\min\left\{
a_\pi s_\mu+\rho_n^2,\,
a_\mu\rho_n+s_\mu^2
\right\}.
\end{align*}

Suppose first that
$a_\pi s_\mu+s_\pi^2
\le
a_\mu s_\pi+s_\mu^2$. Then by definition, we have
\[
B=a_\mu a_\pi+a_\pi s_\mu+s_\pi^2,
\]
and therefore
\begin{align*}
&a_\mu a_\pi+
\min\left\{
a_\pi s_\mu+\rho_n^2,\,
a_\mu\rho_n+s_\mu^2
\right\}\le
a_\mu a_\pi+a_\pi s_\mu+\rho_n^2
=
B+\rho_n^2-s_\pi^2
\le
B+\rho_n^2.
\end{align*}

Suppose next that $a_\mu s_\pi+s_\mu^2
<
a_\pi s_\mu+s_\pi^2$. Then, we know that
\[
B=a_\mu a_\pi+a_\mu s_\pi+s_\mu^2.
\]
Consequently, we have
\begin{align*}
&a_\mu a_\pi+
\min\left\{
a_\pi s_\mu+\rho_n^2,\,
a_\mu\rho_n+s_\mu^2
\right\}
\le
a_\mu a_\pi+a_\mu\rho_n+s_\mu^2=
B+a_\mu(\rho_n-s_\pi).
\end{align*}

i) If \(a_\mu\le s_\pi\), then we obtain $a_\mu(\rho_n-s_\pi)
\le
s_\pi(\rho_n-s_\pi)
\le
\rho_n^2$.

ii) If \(a_\mu>s_\pi\), then we have $a_\mu s_\pi+s_\mu^2
<
a_\pi s_\mu+s_\pi^2$, which implies that
\begin{align*}
a_\pi s_\mu
>
s_\mu^2+s_\pi(a_\mu-s_\pi)
>
s_\mu^2.
\end{align*}
Therefore, we get that $a_\pi>s_\mu\ge\rho_n$,
and hence obtain $a_\mu(\rho_n-s_\pi)
\le
a_\mu\rho_n
\le
a_\mu a_\pi$.

Therefore, in either subcase, we have that
\[
a_\mu(\rho_n-s_\pi)
\le
a_\mu a_\pi+\rho_n^2.
\]
It follows that
\begin{align*}
a_\mu a_\pi+
\min\left\{
a_\pi s_\mu+\rho_n^2,\,
a_\mu\rho_n+s_\mu^2
\right\}
\le
B+a_\mu a_\pi+\rho_n^2
\le
2B+\rho_n^2
\le
2(B+\rho_n^2).
\end{align*}

These four cases prove \eqref{eq:floor-inflation}. Combining
\eqref{eq:bargamma-first-bound} and \eqref{eq:floor-inflation} together, we have
\[
\bar\Gamma
\le
8(B+\rho_n^2).
\]

Our next task if to convert the tail bound in Theorem \ref{thm:upper_bound} into squared risk.

Fix $ \mathcal F_\mu\in\mathcal H_n(a_\mu,s_\mu),\ \mathcal F_\pi\in\mathcal H_n(a_\pi,s_\pi)$ and let \(\mathcal G_\mu,\mathcal G_\pi\) be the corresponding
learner classes. Let \(\widehat\beta\) be the  estimator in
Theorem~\ref{thm:upper_bound}.

Theorem~\ref{thm:upper_bound} implies that, for every $\mathbb P\in\mathcal P(\mathcal F_\mu,\mathcal F_\pi)$ and every \(t\in[1,cn]\), we have
\begin{align}
&\mathbb P^{2n}\left(
\left|
\widehat\beta-\beta(\mathbb P)
\right|
>
C_0\bar\Gamma+
C_0\left\{
\sqrt{\frac tn}+\frac tn
\right\}
\right)\le
C_1\left(e^{-t}+\frac1n\right),
\label{eq:tail-for-integration}
\end{align}
where \(C_0,C_1,c>0\) are uniform constants.

Equivalently, we have
\begin{align}
&\mathbb P^{2n}\left(
\left(
\left|
\widehat\beta-\beta(\mathbb P)
\right|
-C_0\bar\Gamma
\right)_+
>
C_0\left\{
\sqrt{\frac tn}+\frac tn
\right\}
\right)\le
C_1\left(e^{-t}+\frac1n\right).
\label{eq:positive-part-tail}
\end{align}

Recall that$
\widehat\beta\in[-3,3]
\ \text{and}\ 
\beta(\mathbb P)\in[-3,3]$, we have $\left|
\widehat\beta-\beta(\mathbb P)
\right|
\le6$.

Thus, we have
\[
\left(
\left|
\widehat\beta-\beta(\mathbb P)
\right|
-C_0\bar\Gamma
\right)_+
\le6.
\]

Applying the layer-cake identity, we have
\begin{align*}
&\mathbb E_{\mathbb P^{2n}}
\left[
\left(
\left|
\widehat\beta-\beta(\mathbb P)
\right|
-C_0\bar\Gamma
\right)_+^2
\right]=
\int_0^\infty
2u\,
\mathbb P^{2n}\left(
\left(
\left|
\widehat\beta-\beta(\mathbb P)
\right|
-C_0\bar\Gamma
\right)_+
>u
\right)\,du.
\end{align*}

For $0\le u\le
C_0\left(\frac1{\sqrt n}+\frac1n\right)$, we use the trivial probability bound by one. This part of the integral is at most
\[
C_0^2\left(\frac1{\sqrt n}+\frac1n\right)^2
\lesssim
\frac1n.
\]

For the remaining part, by the change of variables $u
=
C_0\left\{
\sqrt{\frac tn}+\frac tn
\right\}$, we have
\begin{align*}
u^2
&=
C_0^2
\left(
\frac tn+
\frac{2t^{3/2}}{n^{3/2}}+
\frac{t^2}{n^2}
\right),
\end{align*}
and hence
\[
\frac{d}{dt}u^2
=
C_0^2
\left(
\frac1n+
\frac{3\sqrt t}{n^{3/2}}+
\frac{2t}{n^2}
\right).
\]

Using \eqref{eq:positive-part-tail}, we obtain
\begin{align*}
\mathbb E_{\mathbb P^{2n}}
\left[
\left(
\left|
\widehat\beta-\beta(\mathbb P)
\right|
-C_0\bar\Gamma
\right)_+^2
\right]
&\le
C_0^2\left(\frac1{\sqrt n}+\frac1n\right)^2+
C_0^2C_1
\int_1^{cn}
\left(
\frac1n+
\frac{3\sqrt t}{n^{3/2}}+
\frac{2t}{n^2}
\right)
\left(e^{-t}+\frac1n\right)\,dt
\\
&+
36C_1\left(e^{-cn}+\frac1n\right).
\end{align*}

The exponentially weighted part satisfies
\begin{align*}
&\int_1^{cn}
\left(
\frac1n+
\frac{3\sqrt t}{n^{3/2}}+
\frac{2t}{n^2}
\right)e^{-t}\,dt
\le
\frac1n\int_1^\infty e^{-t}\,dt
+
\frac3{n^{3/2}}\int_1^\infty\sqrt t\,e^{-t}\,dt
+
\frac2{n^2}\int_1^\infty t e^{-t}\,dt
\lesssim
\frac1n.
\end{align*}

The part arising from the fixed \(n^{-1}\) failure probability satisfies
\begin{align*}
&\frac1n
\int_1^{cn}
\left(
\frac1n+
\frac{3\sqrt t}{n^{3/2}}+
\frac{2t}{n^2}
\right)\,dt
=
\frac1n
\left[
\frac{cn-1}{n}
+
\frac{2\{(cn)^{3/2}-1\}}{n^{3/2}}
+
\frac{(cn)^2-1}{n^2}
\right]
\lesssim
\frac1n.
\end{align*}
Also, notice that $e^{-cn}\lesssim\frac1n$,
combining these bounds together, we have
\begin{align}
\mathbb E_{\mathbb P^{2n}}
\left[
\left(
\left|
\widehat\beta-\beta(\mathbb P)
\right|
-C_0\bar\Gamma
\right)_+^2
\right]
\lesssim
\frac1n.
\label{eq:positive-part-second-moment}
\end{align}

For every nonnegative number \(v\) and every \(d\ge0\), we have $v
\le
d+(v-d)_+$.

Applying this with $v=
\left|
\widehat\beta-\beta(\mathbb P)
\right|
\ \text{and}\ 
d=C_0\bar\Gamma$ gives
\[
\left|
\widehat\beta-\beta(\mathbb P)
\right|
\le
C_0\bar\Gamma+
\left(
\left|
\widehat\beta-\beta(\mathbb P)
\right|
-C_0\bar\Gamma
\right)_+.
\]
Therefore, we have
\begin{align*}
\left|
\widehat\beta-\beta(\mathbb P)
\right|^2
&\le
2C_0^2\bar\Gamma^2
+
2\left(
\left|
\widehat\beta-\beta(\mathbb P)
\right|
-C_0\bar\Gamma
\right)_+^2.
\end{align*}
Taking expectations and applying
\eqref{eq:positive-part-second-moment}, we obtain
\[
\mathbb E_{\mathbb P^{2n}}
\left[
\left|
\widehat\beta-\beta(\mathbb P)
\right|^2
\right]
\lesssim
\bar\Gamma^2+\frac1n.
\]

The constants in this bound are uniform over every admissible
\(\mathcal F_\mu,\mathcal F_\pi\) and every
\(\mathbb P\in\mathcal P(\mathcal F_\mu,\mathcal F_\pi)\). Hence, we have
\begin{align*}
&\sup_{\substack{
\mathcal F_\mu\in\mathcal H_n(a_\mu,s_\mu)\\
\mathcal F_\pi\in\mathcal H_n(a_\pi,s_\pi)
}}
\inf_{\widetilde\beta}
\sup_{\mathbb P\in
\mathcal P(\mathcal F_\mu,\mathcal F_\pi)}
\mathbb E_{\mathbb P^{2n}}
\left[
\left|
\widetilde\beta-\beta(\mathbb P)
\right|^2
\right]
\lesssim
\bar\Gamma^2+\frac1n.
\end{align*}

Since we already proved that
\[
\bar\Gamma
\le
8(B+\rho_n^2).
\]
Therefore, we have
\begin{align*}
\bar\Gamma^2
\le
64(B+\rho_n^2)^2
\le
128B^2+128\rho_n^4.
\end{align*}
Because $\rho_n^4
=
\frac{(\log n)^2}{n^2}
\lesssim
\frac1n$, it follows that
\[
\bar\Gamma^2+\frac1n
\lesssim
B^2+\frac1n.
\]
We have therefore proven
\[
\mathcal E
\left(
n,\{a_q,s_q\}_{q\in\{\mu,\pi\}}
\right)
\lesssim
\frac1n+B^2.
\]

Furthermore, since $\left|
\widehat\beta-\beta(\mathbb P)
\right|^2\le36$, we obtain that
\[
\mathcal E
\left(
n,\{a_q,s_q\}_{q\in\{\mu,\pi\}}
\right)
\lesssim
1\wedge
\left\{
\frac1n+B^2
\right\}.
\]
Finally, according to Theorem~\ref{thm:lower_bound}, we have
\[
\mathcal E
\left(
n,\{a_q,s_q\}_{q\in\{\mu,\pi\}}
\right)
\gtrsim
1\wedge
\left\{
\frac1n+B^2
\right\}.
\]
Combining the upper and lower bounds proves
\[
\mathcal E
\left(
n,\{a_q,s_q\}_{q\in\{\mu,\pi\}}
\right)
\asymp
1\wedge
\left\{
\frac1n+
\left(
a_\mu a_\pi+
\min\left\{
a_\pi s_\mu+s_\pi^2,\,
a_\mu s_\pi+s_\mu^2
\right\}
\right)^2
\right\}.
\]
We finish the proof.
\end{proof}
\section{Omitted Proofs in Appendix \ref{app:proofs_thms}}
\begin{proof}[Proof of Lemma \ref{lem:Rad_com_G(t,M)}]
Fixing \(t\geq0\), \(M\geq1\), and \(\delta>0\), we  write
\[
g_0:=0,
\ 
g_j:=t\phi_j,\  1\leq j\leq M,
\]
so that $\mathcal G(t,M)=\{g_0,g_1,\ldots,g_M\}$.
For \(0\leq j,k\leq M\), define $h_{jk}:=g_j-g_k$.

Every member of the difference class has this form.   Therefore, we have
\[
\partial\mathcal G(t,M)
=
\{h_{jk}:0\leq j,k\leq M\},
\ 
|\partial\mathcal G(t,M)|\leq(M+1)^2.
\]
The inequality allows for the possibility that two different ordered pairs
produce the same function.  Moreover, since \(|g_j(x)|\leq t\) for every
\(j\) and every \(x\),
\begin{equation}
|h_{jk}(x)|
=
|g_j(x)-g_k(x)|
\leq2t.
\label{eq:code-difference-envelope}
\end{equation}

Let $\mathcal W_\delta
:=
\left\{
h\in\partial\mathcal G(t,M):\|h\|_2\leq\delta
\right\}$. The class \(\mathcal W_\delta\) contains zero and is symmetric: if
\(h_{jk}\in\mathcal W_\delta\), then
\(-h_{jk}=h_{kj}\in\mathcal W_\delta\).  Thus, for every realization of
\(X_{1:n}\) and \(\xi_{1:n}\), we have
\[
\sup_{h\in\mathcal W_\delta}
\frac1n\sum_{i=1}^n\xi_i h(X_i)
=
\sup_{h\in\mathcal W_\delta}
\left|
\frac1n\sum_{i=1}^n\xi_i h(X_i)
\right|.
\]
In particular, the absence of an absolute value in the definition of
\(\mathcal R_n\) does not change its value for this localized difference
class.  Discarding the localization constraint can only increase the
supremum, so
\begin{align}
\mathcal R_n\bigl(\delta;\partial\mathcal G(t,M)\bigr)
&=
\mathbb E_{X_{1:n},\xi_{1:n}}
\left[
\sup_{h\in\mathcal W_\delta}
\frac1n\sum_{i=1}^n\xi_i h(X_i)
\right]\leq
\mathbb E_{X_{1:n},\xi_{1:n}}
\left[
\max_{0\leq j,k\leq M}
\frac1n\sum_{i=1}^n\xi_i h_{jk}(X_i)
\right].
\label{eq:localized-to-global-code}
\end{align}

Conditioning on \(X_{1:n}\), we set
\[
Z_{jk}
:=
\frac1n\sum_{i=1}^n\xi_i h_{jk}(X_i).
\]
For every \(\lambda>0\), by the independence of the Rademacher signs and the
inequality \(\cosh(a)\leq e^{a^2/2}\), we obtain
\begin{align}
\mathbb E_{\xi_{1:n}}
\left[
e^{\lambda Z_{jk}}
\mid X_{1:n}
\right]=
\prod_{i=1}^n
\cosh\left(
\frac{\lambda h_{jk}(X_i)}{n}
\right)\leq
\exp\left\{
\frac{\lambda^2}{2n^2}
\sum_{i=1}^n h_{jk}(X_i)^2
\right\}\leq
\exp\left\{
\frac{2\lambda^2t^2}{n}
\right\}.
\label{eq:code-difference-mgf}
\end{align}
The last inequality follows from
\eqref{eq:code-difference-envelope}, which implies that $\sum_{i=1}^n h_{jk}(X_i)^2\leq4nt^2$.

Applying Jensen's inequality, using the elementary bound $e^{\lambda\max_{j,k}Z_{jk}}
\leq
\sum_{j=0}^M\sum_{k=0}^M e^{\lambda Z_{jk}}$
and \eqref{eq:code-difference-mgf}, we have that
\begin{align*}
\exp\left\{
\lambda
\mathbb E_{\xi_{1:n}}
\left[
\max_{0\leq j,k\leq M}Z_{jk}
\mid X_{1:n}
\right]
\right\}
&\leq
\mathbb E_{\xi_{1:n}}
\left[
e^{\lambda\max_{j,k}Z_{jk}}
\mid X_{1:n}
\right]
\leq
\sum_{j=0}^M\sum_{k=0}^M
\mathbb E_{\xi_{1:n}}
\left[
e^{\lambda Z_{jk}}
\mid X_{1:n}
\right]\\
&\leq
(M+1)^2
\exp\left\{
\frac{2\lambda^2t^2}{n}
\right\}.
\end{align*}
Taking logarithms and dividing by \(\lambda\) on both sides yields
\begin{equation}
\mathbb E_{\xi_{1:n}}
\left[
\max_{0\leq j,k\leq M}Z_{jk}
\mid X_{1:n}
\right]
\leq
\frac{2\log(M+1)}{\lambda}
+
\frac{2\lambda t^2}{n}.
\label{eq:code-max-before-optimization}
\end{equation}
If \(t=0\), every function in \(\partial\mathcal G(0,M)\) is zero and the
claim is immediate.  Suppose that \(t>0\).  Since \(M\geq1\), we may choose
\[
\lambda
:=
\frac{\sqrt{n\log(M+1)}}{t}.
\]
Substituting this choice into \eqref{eq:code-max-before-optimization}, we obtain
\begin{align*}
\mathbb E_{\xi_{1:n}}
\left[
\max_{0\leq j,k\leq M}Z_{jk}
\mid X_{1:n}
\right]
\leq
2t\sqrt{\frac{\log(M+1)}{n}}
+
2t\sqrt{\frac{\log(M+1)}{n}}
=
4t\sqrt{\frac{\log(M+1)}{n}}.
\end{align*}
The right-hand side is independent of \(X_{1:n}\).  Taking expectation over
\(X_{1:n}\) in \eqref{eq:localized-to-global-code} proves
\[
\mathcal R_n\bigl(\delta;\partial\mathcal G(t,M)\bigr)
\leq
4t\sqrt{\frac{\log(M+1)}{n}}.
\]
\end{proof}

\begin{proof}[Proof of Lemma \ref{lem:chi2_dist}]
Recall from the construction that \(b=\alpha r\).  Define the
one-observation likelihood ratios with respect to \(Q\) by
\[
L_*(x,t,y)
:=
\frac{dP_*}{dQ}(x,t,y)
=
1+bty
\]
and
\[
L_j(x,t,y)
:=
\frac{dP_j}{dQ}(x,t,y)
=
\{1+\alpha t\phi_j(x)\}
\{1+ry\phi_j(x)\},
\ 1\leq j\leq M.
\]
Because \(0\leq\alpha,r\leq1/8\), we have
\(0\leq b\leq1/64\), and hence,
\[
L_*(x,t,y)=1+bty\geq1-b>0.
\]
Thus, every \(P_j\) is absolutely continuous with respect to \(P_*\).

We first calculate the one-observation likelihood cross moments.  For
\(1\leq j,k\leq M\), define
\[
A_{jk}
:=
\mathbb E_{P_*}
\left[
\frac{dP_j}{dP_*}(X,T,Y)
\frac{dP_k}{dP_*}(X,T,Y)
\right].
\]
Since \(dP_*=L_*\,dQ\) and \(dP_j/dP_*=L_j/L_*\),
\begin{equation}
A_{jk}
=
\mathbb E_Q
\left[
\frac{L_j(X,T,Y)L_k(X,T,Y)}
     {L_*(X,T,Y)}
\right].
\label{eq:one-observation-cross-moment}
\end{equation}

Define $S:=\alpha T+rY,
\ 
U:=TY$. Because \(\phi_j(X)^2=1\) and \(b=\alpha r\), we have
\begin{align}
L_j(X,T,Y)
&=
\{1+\alpha T\phi_j(X)\}
\{1+rY\phi_j(X)\}
\nonumber\\
&=
1+\alpha T\phi_j(X)+rY\phi_j(X)
+\alpha rTY\phi_j(X)^2
\nonumber\\
&=
1+bTY+\phi_j(X)(\alpha T+rY)
\nonumber\\
&=
L_*(X,T,Y)+\phi_j(X)S.
\label{eq:likelihood-decomposition}
\end{align}
The same identity holds with \(j\) replaced by \(k\).  Therefore, we obtain
\begin{align}
\frac{L_jL_k}{L_*}
=
\frac{
\{L_*+\phi_j(X)S\}
\{L_*+\phi_k(X)S\}
}{L_*}
=
L_*
+\{\phi_j(X)+\phi_k(X)\}S
+\phi_j(X)\phi_k(X)\frac{S^2}{L_*}.
\label{eq:cross-integrand-expansion}
\end{align}

Under \(Q\), \(X\) is independent of \((T,Y)\).  Moreover, \(T\) and \(Y\)
are independent Rademacher random variables.  Consequently,
\[
\mathbb E_Q[L_*]
=
\mathbb E_Q[1+bTY]
=1,
\ 
\mathbb E_Q[S]
=
\alpha\mathbb E_Q[T]+r\mathbb E_Q[Y]
=0.
\]
Taking expectations in \eqref{eq:cross-integrand-expansion} and using the
independence of \(X\) and \((T,Y)\), we have
\begin{equation}
A_{jk}
=
1+
\mathbb E_X[\phi_j(X)\phi_k(X)]
\mathbb E_{T,Y}
\left[
\frac{S^2}{1+bU}
\right].
\label{eq:cross-moment-reduced}
\end{equation}

It remains to evaluate the second expectation.  Because \(T\) and \(Y\)
are independent Rademacher variables, \(U=TY\) is also Rademacher, so $\mathbb E[U]=0,
\ U^2=1$.
Moreover, notice that
\[
S^2
=
(\alpha T+rY)^2
=
\alpha^2+r^2+2\alpha rTY
=
\alpha^2+r^2+2bU,\ \frac1{1+bU}
=
\frac{1-bU}{1-b^2}.
\]
It follows that
\begin{align}
\mathbb E_{T,Y}
\left[
\frac{S^2}{1+bU}
\right]
&=
\frac1{1-b^2}
\mathbb E
\left[
\{\alpha^2+r^2+2bU\}(1-bU)
\right]
\nonumber\\
&=
\frac1{1-b^2}
\mathbb E
\left[
\alpha^2+r^2
+\{2b-b(\alpha^2+r^2)\}U
-2b^2U^2
\right]
\nonumber\\
&=
\frac{\alpha^2+r^2-2b^2}{1-b^2}
\nonumber\\
&=
D.
\label{eq:diagonal-increment}
\end{align}
Combining \eqref{eq:cross-moment-reduced} and
\eqref{eq:diagonal-increment}, we obtain that
\[
A_{jk}
=
1+D\mathbb E_X[\phi_j(X)\phi_k(X)].
\]
The code functions are orthonormal in \(L_2(\nu_d)\), so we have $\mathbb E_X[\phi_j(X)\phi_k(X)]
=
\mathbf 1\{j=k\}$.

Consequently,
\begin{equation}
A_{jk}
=
\begin{cases}
1+D, & j=k,\\
1,   & j\neq k.
\end{cases}
\label{eq:cross-moment-final}
\end{equation}
In particular, every off-diagonal cross moment is exactly one.

We now pass to \(N\) observations.  Write
\(Z_i=(X_i,T_i,Y_i)\).  By the product structure,
\[
\frac{dP_j^{\otimes N}}{dP_*^{\otimes N}}(Z_1,\ldots,Z_N)
=
\prod_{i=1}^N\frac{L_j(Z_i)}{L_*(Z_i)}.
\]
Hence,
\begin{equation}
\frac{d\overline P_N}{dP_*^{\otimes N}}(Z_1,\ldots,Z_N)
=
\frac1M\sum_{j=1}^M
\prod_{i=1}^N\frac{L_j(Z_i)}{L_*(Z_i)}.
\label{eq:mixture-likelihood-ratio}
\end{equation}
Recall that
\[
\chi^2(P\|Q)
:=
\mathbb E_Q
\left[
\left(\frac{dP}{dQ}-1\right)^2
\right]
=
\mathbb E_Q
\left[
\left(\frac{dP}{dQ}\right)^2
\right]-1.
\]
Using \eqref{eq:mixture-likelihood-ratio}, expanding the square, and using
independence under \(P_*^{\otimes N}\), we obtain
\begin{align}
1+
\chi^2\!\left(\overline P_N\middle\|P_*^{\otimes N}\right)
&=
\mathbb E_{P_*^{\otimes N}}
\left[
\left(
\frac{d\overline P_N}{dP_*^{\otimes N}}
\right)^2
\right] \nonumber\\
&=
\frac1{M^2}
\sum_{j=1}^M\sum_{k=1}^M
\mathbb E_{P_*^{\otimes N}}
\left[
\prod_{i=1}^N
\frac{L_j(Z_i)L_k(Z_i)}{L_*(Z_i)^2}
\right]
\nonumber\\
&=
\frac1{M^2}
\sum_{j=1}^M\sum_{k=1}^M
\left\{
\mathbb E_{P_*}
\left[
\frac{L_j(Z)L_k(Z)}{L_*(Z)^2}
\right]
\right\}^{N}
\nonumber\\
&=
\frac1{M^2}
\sum_{j=1}^M\sum_{k=1}^M A_{jk}^N.
\label{eq:mixture-second-moment}
\end{align}
By \eqref{eq:cross-moment-final}, the last double sum contains \(M\)
diagonal terms equal to \((1+D)^N\) and \(M(M-1)\) off-diagonal terms
equal to one.  Thus, we have
\begin{align*}
1+
\chi^2\!\left(\overline P_N\middle\|P_*^{\otimes N}\right)
&=
\frac{M(1+D)^N+M(M-1)}{M^2}
=
1+\frac{(1+D)^N-1}{M}.
\end{align*}
Subtracting one from both sides, we prove that
\[
\chi^2\!\left(\overline P_N\middle\|P_*^{\otimes N}\right)
=
\frac{(1+D)^N-1}{M}.
\]
Finally, by definition, we know that
\[
d_{\mathrm{TV}}(P,Q)
:=
\frac12
\mathbb E_Q
\left|
\frac{dP}{dQ}-1
\right|.
\]
By the Cauchy-Schwarz inequality, we have
\begin{align*}
d_{\mathrm{TV}}\!\left(\overline P_N,P_*^{\otimes N}\right)
&=
\frac12
\mathbb E_{P_*^{\otimes N}}
\left|
\frac{d\overline P_N}{dP_*^{\otimes N}}-1
\right|
\leq
\frac12
\left\{
\mathbb E_{P_*^{\otimes N}}
\left[
\left(
\frac{d\overline P_N}{dP_*^{\otimes N}}-1
\right)^2
\right]
\right\}^{1/2}
=
\frac12
\sqrt{
\chi^2\!\left(\overline P_N\middle\|P_*^{\otimes N}\right)
}
\end{align*}
Thus, we prove that
\[
d_{\mathrm{TV}}\!\left(\overline P_N,P_*^{\otimes N}\right)=
\frac12
\sqrt{\frac{(1+D)^N-1}{M}}.
\]
This proves the second claim.
\end{proof}

\begin{proof}[Proof of Lemma \ref{lem:two-sided-truncation}]
Set $L:=\min\{u+\tau,v+\rho\}$.
The values of \(x=u\wedge\rho\) and \(y=v\wedge\tau\) depend on whether
\(u\) and \(v\) lie below or above their respective truncation levels.  We
therefore consider the four possible configurations.

\medskip
\noindent
\emph{Case 1: \(u\leq\rho\) and \(v\leq\tau\).}
Here \(x=u\) and \(y=v\).  If \(\rho\leq\tau\), then \(z=\rho\), and
\[
L
\leq
v+\rho
=
y+z.
\]
If instead \(\tau<\rho\), then \(z=\tau\), and
\[
L
\leq
u+\tau
=
x+z.
\]

\medskip
\noindent
\emph{Case 2: \(u\leq\rho\) and \(v>\tau\).}
Now \(x=u\) and \(y=\tau\).  Hence,
\[
L
\leq
u+\tau
=
x+y.
\]

\medskip
\noindent
\emph{Case 3: \(u>\rho\) and \(v\leq\tau\).}
Now \(x=\rho\) and \(y=v\).  Therefore,
\[
L
\leq
v+\rho
=
x+y.
\]

\medskip
\noindent
\emph{Case 4: \(u>\rho\) and \(v>\tau\).}
This case forces \(\rho>0\) and \(\tau>0\).  Indeed, if either were zero,
then \(u>0\) and \(v>0\), so \(uv>0\), whereas
\(p\sqrt{\rho\tau}=0\), contradicting
\(uv\leq p\sqrt{\rho\tau}\).

Suppose first that \(\rho\leq\tau\).  Since \(v>\tau\), we have $v>\tau\geq\sqrt{\rho\tau}$.

Together with the assumed product inequality, this yields
\[
uv
\leq
p\sqrt{\rho\tau}
<
pv.
\]
Since \(v>0\), division by \(v\) gives \(u<p\).  In this subcase
\(y=\tau\), and hence
\[
L
\leq
u+\tau
\leq
p+y.
\]

Suppose instead that \(\tau<\rho\).  Since \(u>\rho\), we have
\[
u>\rho>\sqrt{\rho\tau}.
\]
It follows that $uv
\leq
p\sqrt{\rho\tau}
<
pu$.

Since \(u>0\), division by \(u\) gives \(v<p\).  In this subcase
\(x=\rho\), and therefore
\[
L
\leq
v+\rho
\leq
p+x.
\]

All four cases establish an upper bound by a sub-sum of
\(p+x+y+z\).  Therefore,
\[
\min\{u+\tau,v+\rho\}
=L
\leq
p+x+y+z.
\]
Adding \(p\) to both sides gives
\[
p+\min\{u+\tau,v+\rho\}
\leq
2p+x+y+z.
\]
Finally, if \(m:=\max\{p,x,y,z\}\), then we have
$2p+x+y+z
\leq
2m+m+m+m
=
5m$.

Thus, we finish the proof.
\end{proof}


\end{document}